\newtheorem{assumption}{Assumption}
\title{Direct and indirect reinforcement learning}
\author[1]{Yang Guan}
\author[1]{Shengbo Eben Li}
\author[1]{Jingliang Duan}
\author[1]{Jie Li}
\author[1]{Yangang Ren}
\author[1]{Qi Sun}
\author[1]{Bo Cheng}
\affil[1]{School of Vehicle and Mobility, Tsinghua University, Beijing, 100084, China}
\runningauthor{Yang Guan et al., Published in \textit{International Journal of Intelligent Systems}}
\begin{document}
\begin{frontmatter}
\maketitle

\begin{abstract}
Reinforcement learning (RL) algorithms have been successfully applied to a range of challenging sequential decision making and control tasks. In this paper, we classify RL into direct and indirect RL according to how they seek the optimal policy of the Markov decision process problem. The former solves the optimal policy by directly maximizing an objective function using gradient descent methods, in which the objective function is usually the expectation of accumulative future rewards. The latter indirectly finds the optimal policy by solving the Bellman equation, which is the sufficient and necessary condition from Bellman's principle of optimality. We study policy gradient forms of direct and indirect RL and show that both of them can derive the actor-critic architecture and can be unified into a policy gradient with the approximate value function and the stationary state distribution, revealing the equivalence of direct and indirect RL. We employ a Gridworld task to verify the influence of different forms of policy gradient, suggesting their differences and relationships experimentally. Finally, we classify current mainstream RL algorithms using the direct and indirect taxonomy, together with other ones including value-based and policy-based, model-based and model-free.

\keywords{Reinforcement learning, Approximate dynamic programming, Direct method, Indirect method, Actor-critic}
\end{abstract}
\end{frontmatter}

\section{Introduction}\label{sec.intro}
Reinforcement learning (RL) algorithms have been applied and achieved good performance in a wide variety of challenging domains, including robotics \cite{levine2018learning}, computer games \cite{mnih2015human,vinyals2019grandmaster}, board games \cite{silver2016mastering,silver2017mastering} and many others \cite{sutton2018reinforcement}. Over the last few decades, numerous RL algorithms have appeared in literature on sequential decision-making and optimal control, but most fall into one of two families: value-based RL and policy-based RL, depending on whether a parameterized policy has been learned.

Value-based RL methods involve fitting an action-value function, called the Q-values, that captures the expected return for taking a particular action at a certain state. They do not learn an explicit parameterized policy but derive the optimal action by maximizing the Q-values. Early RL algorithms mostly fall into this category, such as policy iteration, value iteration \cite{puterman1978modified}, Monte Carlo algorithm \cite{sutton1988learning}, and temporal difference (TD) methods \cite{rummery1994line, watkins1992q}. Commonly, they alternatively perform the policy evaluation (PEV) and policy improvement (PIM) procedures until the Q-values converge, where the PEV aims to evaluate the Q-values of the current implicit policy while the PIM is to find a better policy given the updated Q-values.
One of the early breakthroughs in RL is the famous Q-learning (QL) \cite{watkins1992q}, which can learn a deterministic greedy policy while exploring with a stochastic policy, e.g. $\epsilon$-greedy. Its recent extension on approximate functions, deep Q-networks (DQN), may be the most important work in RL in the last decade. DQN adopts neural networks (NN) as Q-values, and introduces the target network and experience replay to obtain competitive performance against human beings in Go and computer games \cite{mnih2015human, silver2016mastering}, kicking off the recent extensive studies on RL. Numerous improvements based on DQN has been made to further boost performance. Double DQN (DDQN) partially addresses the overestimation problem of QL by decoupling selection and evaluation of the bootstrap action \cite{van2016deep}. Prioritized experience replay (PER) replays transitions with larger weights more frequently to learn more efficiently than DQN \cite{schaul2015prioritized}. Dueling DQN uses dueling architecture consists of two streams that represent the value and advantage functions to help to generalize across actions \cite{wang2015dueling}. Instead of estimating Q-values, Bellemare et al. proposed to learn a categorical distribution of discounted returns, which preserves multimodality in value distributions and leads to more stable learning \cite{bellemare2017distributional}. Rainbow combines these independent improvements on the DQN and provides state-of-the-art performance on the Atari 2600 benchmark \cite{hessel2018rainbow}. While QL and its variants learn by one-step bootstraping, Retrace($\lambda$) is the first return-based off-policy control algorithm converging a.s. to $Q^*$ \cite{munos2016safe}. Apart from the aforementioned model-free methods, some studies employ a model in the value-based learning to further enhance its performance. Gu et al. proposed a continuous variant of the Q-learning algorithm called normalized advantage functions (NAF). Specially, they explore the use of learned models to improve the efficiency of their approach \cite{gu2016continuous}. Oh et al. proposed to learn a model of rewards, in contrast to typical model-based RL methods which learn a model of future observations, to augment model-free learning with good results on a number of Atari games \cite{oh2017value}. Holland et al. used a variant of Dyna to learn a model of the environment and generate experience for policy training in the context of Atari games \cite{holland2018effect}. Azizzadenesheli et al. proposed an algorithm called Generative Adversarial Tree Search (GATS) and trained a GAN-based world model along with a Q-function for five Atari games \cite{azizzadenesheli2018sample}.

However, value-based methods would fail in the case of continuous or large discrete action space because finding the greedy policy would become impractical in such action space. To solve this problem, methods that learn a parameterized policy were proposed, called policy-based methods, in which the policy enables actions to be taken without consulting the Q-values. Policy-based methods can learn specific probabilities for taking actions. Besides, they can approach deterministic policies asymptotically and naturally handle continuous action space. The REINFORCE algorithm might be the first attempt of policy-based learning, which adjusts the policy parameter by a Monte-Carlo policy gradient (PG) \cite{williams1992simple}. The study by Sutton et al. formally presented the vanilla PG theorem with a differentiable approximation function, and proved its convergence to a local minimum \cite{sutton2000policy}. Following their studies, the actor-critic algorithm was proposed by Konda and Tsitsiklis, which learns a parameterized value as the baseline to reduce the variance in vanilla policy gradients \cite{konda2000actor}. Both REINFORCE and vanilla PG are on-policy, i.e., training samples are collected under the target policy. To enhance the sample efficiency, off-policy technique is extensively studied, which can learn the target policy using samples from any behavior policies and thus facilitate asynchronous training \cite{mnih2016asynchronous, horgan2018distributed, espeholt2018impala}. Importance Sampling (IS) is a common adopted way for off-policy training \cite{degris2012off}, but nevertheless suffers from large variance. Some works have attempted to use a truncated IS to prevent variance explosion \cite{wang2016sample, gruslys2017reactor}, while others tried to combine on- and off-policy learning, primarily using a interpolated scheme \cite{gu2017interpolated, oh2018self}, with varying degrees of success. So far in the abovementioned methods, the policy must be stochastic, which always has a probability distribution over actions. In 2014, Silver et al. first proposed the deterministic policy gradient (DPG), which chooses a parameterized function to make deterministic decision. The DPG has the ability to accurately estimate
the policy gradient using less samples as it removes the integration over actions \cite{silver2014deterministic}. Its ``deep'' variants, deep DPG (DDPG) and twin delayed DDPG (TD3) first successfully solves more than 20 simulated physics tasks \cite{lillicrap2015continuous, fujimoto2018addressing, d4pg2018}. One challenge of policy-based methods, is how to improve the training stability. The key idea is to avoid changing the policy too much at each step, such as using trust-region constraints or a clipped surrogate loss function \cite{kakade2002natural,schulman2015trust, wu2017scalable, schulman2017proximal}. In addition, policy gradient can also be combined with entropy regularization to increase the ability of policy exploration \cite{haarnoja2017reinforcement, haarnoja2018soft, duan2020addressing}. Duan et al. proposed distributional soft actor-critic (DSAC), which incorporates the entropy measure into the reward to encourage exploration, and updates the policy to minimize the KL-divergence with Boltzmann policy, achieving the state-of-the-art performance on the suite of MuJoCo continuous control tasks. \cite{duan2020addressing}. While the above methods need to have an analytical form of policy gradient, finite difference (FD) is a class of black box optimization algorithms. Although FD methods are not suitable for large capacity policy models like neural networks due to high computation expenses, they have several advantages: it is tolerant of extremely long horizons, and does not need temporal discounting or value function approximation \cite{salimans2017evolution, mania2018simple}. Environment models are usually employed in the policy-based method in two ways. The first one requires a differentiable model, through which the analytical PG can be directly calculated by backpropagation of rewards along a trajectory. A pioneer work is PILCO proposed by Deisenroth and Rasmussen, in which they learned a probabilistic model, and computed an analytical PG over a finite-horizon objective function to learn a deterministic policy \cite{deisenroth2011pilco}. Recently, Dreamer is proposed to handle control with images as input, which encodes observation in the latent space by reconstruction and learns a world model by supervised learning. It can efficiently learn behaviors purely from latent imagination by propagating analytic gradients, making itself a new paradigm of control method through image \cite{DREAMER, CURL, AUGMENTEDATA}.
Another famous example is known as the approximate dynamic programming (ADP). Under its framework a large number of structures are proposed \cite{white1992handbook,prokhorov1997adaptive, heess2015learning}. The other way is to make use of model generated data in the learning process and then use the model-free technique. Feinberg et al. presented model-based value expansion (MVE), which learns the value function using a fixed depth model predictions and updates the policy by DPG \cite{feinberg2018model}. Similar idea can be found in the literature \cite{levine2013guided, buckman2018sample, kurutach2018model, ha2018recurrent, racaniere2017imagination, janner2019trust}.

These two families of RL are distinctive in the algorithm structure but have a few subtle connections. Some works has attempted to draw the exact equivalence between their respect representatives, i.e., QL and vanilla PG, mostly in the entropy regularized framework with a stochastic policy. The very beginning of such exploration is the finding of Boltzmann policy, which demonstrates the QL secretly learns such a policy, even if it has no explicit one \cite{ziebart2010modeling}. O’Donoghue et al. discussed the connection between the fixed points and updates of PG and QL methods. In particular they show that TD-actor-critic \cite{konda2003onactor} is equivalent to expected-SARSA \cite{sutton2018reinforcement}, though the discussion of fixed points is restricted to the tabular setting, and the discussion comparing updates is informal and shows an approximate equivalence. Schulman et al. showed that under appropriate conditions, there exists a precise equivalence between QL and PG in a way that the gradient of the loss function used in n-step QL is equal to the gradient of the loss used in an n-step PG method, including a squared-error term on the value function \cite{schulman2017equivalence}. Nachum et al. developed a new RL algorithm called path consistency learning (PCL) from the relationship between softmax temporal value consistency and policy optimality under entropy regularization, and discovered PCL is in fact the generalization of both PG and QL algorithms given different conditions \cite{nachum2017bridging, nachum2017trust}.

In summary, the current mainstream RL algorithms are categorized by the taxonomy of value-based and policy-based, depending on whether a parameterized policy is incorporated in the algorithm structure. It is explicit but rather superficial for us to understand the behind mechanisms. Besides, the equivalence discussions based on it confine to the specific algorithms, which is precise, but is too trivial to capture the essential differences and correlations between the two classes of RL methods in the big picture. In this paper, we make clear of the question of what makes the distinctions of the two families of RL algorithms from the optimal control perspective. The contributions are emphasized in the following way:

1. Enlightened by the optimal control, we employ a ``direct" and ``indirect" taxonomy, which classifies the RL algorithms by the mechanism they attain the optimal policy. Specifically, the direct RL gets the optimal policy by directly optimizing an objective function concerning the expectation of accumulative future rewards, while the indirect RL turns to solve the sufficient and necessary condition of the optimal solution, i.e., Bellman equation, from which the optimal policy can be derived indirectly. We found from the experiment that the value-based method is actually a special class of indirect RL, where it implies a greedy policy defined on a finite and discrete action space. On the other hand, the policy-based method can be either direct or indirect, depending on its original derivation. Our taxonomy is closer to the basic mechanisms in RL to help us understand algorithms in a deeper and more systematic manner.

2. We study the policy gradient form of both direct and indirect RL, and show that they only differ in terms of the state distribution and the value function, and under certain conditions they both can be unified into a policy gradient with the stationary state distribution and the approximate value function. Besides, the actor-critic architecture can be derived separately from them, revealing the equivalence between the two families of RL algorithms.

3. We verify the theoretical results by a simulation, in which different forms of PG are compared. The results manifest the influence of the state distribution and the value function on the training process, from which the differences and correlations between the direct and indirect RL can be recognized.

4. We classify current mainstream RL algorithms according to direct and indirect taxonomy, together with other taxonomies including value-based and policy-based, model-based and model-free.

The rest of this paper is organized as follows. Section \ref{sec.preliminaries} provides necessary RL preliminaries in terms of value functions and stationary state distribution. Section \ref{sec.dirl} introduces the definition of the direct and indirect RL, studies the form of their policy gradient, and reveals their equivalence under particular conditions. Based on that, the algorithm classification under our taxonomy, together with other taxonomies, is presented. Then in section \ref{sec.convergence}, the convergence results of the direct and indirect RL are established based on existing theorems. Section \ref{sec.experiment} conducts two experiments to verify the connection between our taxonomy and others, as well as the relationship between the direct and indirect RL. Finally, section \ref{sec.conclusion} concludes our findings and discusses the future work.

\section{Preliminary}\label{sec.preliminaries}
We study the standard reinforcement learning setting in which an agent interacts with an environment by observing a state $s$, selecting an action $a$, receiving a reward $r$, and observing the next state $s'$. We model this process with a Markov Decision Process (MDP) $\langle\mathcal{H}, p, \pi, r, d^{0}\rangle$. Here, $\mathcal{H}= \left(\mathcal{S}, \mathcal{A}\right)$ denotes the state and action space and $p: \mathcal{S} \times \mathcal{A} \times \mathcal{S} \rightarrow \mathbb{R}$ is the transition function. Throughout this paper we will assume that $\mathcal{S}$ and $\mathcal{A}$ are finite set and $\mathcal{S}$ has $n$ different states, i.e., $\mathcal{S}=\{s^1, s^2, \dots, s^n\}$. A policy $\pi$ maps a state to a distribution over actions, $r : \mathcal{S} \times \mathcal{A} \times \mathcal{S} \rightarrow \mathbb{R}$ is the reward function, $d^{0} : \mathcal{S} \rightarrow \mathbb{R}$ is the distribution of the initial state $s_0$ and we define $\gamma$ as the discount factor. Next, we will mainly introduce the concept of value function and stationary state distribution in RL, to support the following analysis.

\subsection{Value functions in RL}
In RL, we always seek to find an optimal policy ${\pi}^*$ which can maximize a long-term objective about the expected discounted sum of future rewards over the initial state distribution
\begin{equation}\label{eq.orig_obj}
    \pi^*=\arg\max_{\pi}\mathbb{E}_{s_0, a_1, s_2,\cdots}\left\{\sum_{l=0}^{\infty} \gamma^{l}r_l \right\},
\end{equation}
where $s_0\sim d^0$, $a_k \sim \pi\left(a_k | s_k\right)$, $s_{k+1} \sim p\left(s_{k+1} | s_k, a_k\right)$ and $r_{k}= r\left(s_{k}, a_{k}, s_{k+1}\right), k\ge0$. To measure the value of a state, the state-value function $v^{\pi}(s)$ is accordingly defined as the expected discounted sum of future rewards starting from the state $s$ and thereafter following policy $\pi$,
\begin{equation}\label{eq.def_state_value}
    v^{\pi}(s) =\mathbb{E}_{a_{0}, s_{1}, \cdots}\left\{\sum_{l=0}^{\infty} \gamma^{l}r_l  | s_0=s\right\}
\end{equation}
Similarly, the action-value function $q^{\pi}(s,a)$ is defined to quantify the value of taking action $a$ in state $s$,
\begin{equation}\label{eq.def_action_value}
    q^{\pi}(s,a) =\mathbb{E}_{s_{1}, a_{1}, \ldots}\left\{\sum_{l=0}^{\infty} \gamma^{l} r_l | s_{0}=s, a_{0}=a\right\} = \mathbb{E}_{s_1\sim p}\left\{r_0 + \gamma v^{\pi}(s_1)| s_{0}=s, a_{0}=a\right\}
\end{equation}
where the second equation reveals the relationship between the two kinds value function. It holds from the definition of the state-value function.

By the dynamic programming, we can get the relationship between the state values of adjacent states under arbitrary policies, called self-consistency condition
\begin{equation}
\label{eq.self_consistent}
    v^{\pi}(s) =\mathbb{E}_{a \sim \pi, s' \sim p}\left\{r+ \gamma v^{\pi}(s')\right\},
\end{equation}
The Bellman equation is the self-consistency condition under the optimal policy
\begin{equation}\label{eq.bellman}
v^*(s) = \max_{\pi}\left[ \mathbb{E}_{a \sim \pi, s' \sim p}\left\{r+ \gamma v^*(s')\right\}\right]
\end{equation}
where $v^*$ is the state-value function of the optimal policy. For simplicity, we then rewrite the self-consistency condition and the Bellman equation in a more compact vector form. First, we define the state transition function $p_{\pi}\left(s^{\prime} | s\right) =\sum_{a \in \mathcal{A}} \pi(a | s) p\left(s^{\prime} | s, a\right)$ and denote $\mathbb{E}_{a \sim \pi, s' \sim p}\left\{ r(s, a, s')\right\}$ as $r_{\pi}(s)$. On the basis of that, we build the vector form of value function, reward function and the state transition matrix, shown as follows,
\begin{equation}\label{eq.vector_form}
\begin{aligned}
    \bm{v^{\pi}}&=
    \begin{bmatrix}
    v^{\pi}(s^1) & v^{\pi}(s^2) & \cdots & v^{\pi}(s^n)
    \end{bmatrix}^{\top},\\
    \bm{v^*}&=
    \begin{bmatrix}
    v^*(s^1) & v^*(s^2) & \cdots & v^*(s^n)
    \end{bmatrix}^{\top},\\
    \bm{r_{\pi}}&=
    \begin{bmatrix}
    r_{\pi}(s^1) & r_{\pi}(s^2) & \cdots & r_{\pi}(s^n)
    \end{bmatrix}^{\top},\\
    \bm{p_{\pi}}&=
    \begin{bmatrix}
    p_{\pi}(s^1|s^1)& p_{\pi}(s^2|s^1)& \cdots& p_{\pi}(s^n|s^1)\\
    p_{\pi}(s^1|s^2)& p_{\pi}(s^2|s^2)& \cdots& p_{\pi}(s^n|s^2)\\
    \vdots & \vdots & \ddots & \vdots\\
    p_{\pi}(s^1|s^n)& p_{\pi}(s^2|s^n)& \cdots& p_{\pi}(s^n|s^n)\\
    \end{bmatrix},
\end{aligned}
\end{equation}
Finally, we define the self-consistency operator $\mathcal{T}_{\pi}$ and the Bellman operator $\mathcal{T}_{*}$ as
\begin{equation}\label{eq.operations}
\begin{aligned}
\mathcal{T}_{\pi} \bm{v} &=\bm{r_{\pi}}+\gamma \bm{p_{\pi}} \bm{v}\\
\mathcal{T}_{*}\bm{v} &=\max_{\pi}\left[ \bm{r_{\pi}}+\gamma \bm{p_{\pi}} \bm{v}\right]\\
\end{aligned}
\end{equation}
both of which map from an n-dimensional vector to another n-dimensional vector, i.e., $\mathbb{R}^{n} \rightarrow \mathbb{R}^{n}$. The self-consistency condition and the Bellman equation can therefore be rewritten as
\begin{align}
    \bm{v^{\pi}} &= \mathcal{T}_{\pi} \bm{v^{\pi}}\nonumber\\
    \bm{v^*} &= \mathcal{T}_{*}\bm{v^*}\label{eq.vector_bellman}
\end{align}
Both $\mathcal{T}_{\pi}$ and $\mathcal{T}_{*}$ are contraction mappings with respect to maximum norm, which means $\mathcal{T}_{\pi}$ and $\mathcal{T_*}$ have unique fixed point $\bm{v^{\pi}}$ and $\bm{v^*}$, respectively. Besides, the process $\bm{v^{k+1}} =\mathcal{T}_{\pi} \bm{v^{k}}$ converges to $\bm{v^{\pi}}$, and the process $\bm{v^{k+1}} =\mathcal{T}_{*} \bm{v^{k}}$ converges to $\bm{v^*}$.

Large-scale MDPs have numerous states and/or actions to be dealt with. Learning the value function for each individual state appears to be impractical. A empirical way is to utilize function approximation technique to generalize from seen states to unseen states. Specially, state-value function $v^{\pi}(s)$ is estimated by an estimate function $V(s, w)$ parameterized by $w\in \mathbb{R}^m$, $V_w$ or $V_w(s)$ for short. Besides, we approximate the policy by $\pi(a | s, \theta)$ parameterized by $\theta \in \mathbb{R}^m$, $\pi_\theta$ or $\pi_\theta(a|s)$ for short. Since the tabular can be regarded as a special case of the parameterized function, we will mainly discuss how to obtain the optimal policy function $\pi_{\theta^*}$ and the value function $V_{w^*}$. Same as $\bm{v^{\pi_{\theta}}}$, we use $\bm{V_w}$ to denote the vector form of the approximate value function.

\subsection{Stationary state distribution}
A distribution $d: \mathcal{S}\rightarrow\mathbb{R}$ is called a stationary state distribution if it remains unchanged in the Markov chain as time progresses, i.e.,
\begin{equation}
\label{eq.sdp}
    \bm{d}=\bm{p_{\pi}}^{\top}\bm{d},
\end{equation}
where $\bm{d} = [d(s^1)\ d(s^2)\ \cdots\ d(s^n)]^{\top}$. We denote the stationary state distribution of a Markov chain as $d^{\pi}$. According to the properties of Markov chain \cite{ross1996stochastic}, given policy $\pi$, there exists a unique stationary state distribution $d^{\pi}$ if the Markov chain generated by $p_{\pi}$ is indecomposible, nonperiodic and positive-recurrent.
\begin{assumption}\label{myassumption1}{The Markov chain generated by $p_{\pi}$ is indecomposible, nonperiodic and positive-recurrent.}
\end{assumption}
We will assume Assumption \ref{myassumption1} holds throughout this paper. By the
property of Markov chains, the state distribution always becomes stationary as time goes by.

\section{Direct RL and indirect RL}\label{sec.dirl}
In this section, we will formally give the definition of direct and indirect RL, and analyze the form of their policy gradients, respectively. Besides, we will unify them under the actor-critic architecture and discuss their equivalence. Finally, by the proposed taxonomy, mainstream RL algorithms in recent years are classified against other taxonomies.

In the field of optimal control, there are mainly two classes of numerical method to obtain the optimal solution, named by direct and indirect method. The direct optimal control transforms the problem into a nonlinear programming problem, where the controls along the trajectory are parameterized from a finite dimensional space of control function. The problem is then solved by the usual optimization methods. In contrast, the indirect optimal control aims to formulate the necessary conditions of the optimal solution from the calculus of variations or the maximum principle. Then the controls can be obtained analytically or numerically by solving the necessary conditions. Similar to optimal control, RL is also a method of trajectory optimization. Therefore, we make an analogy between RL and optimal control to introduce a similar direct and indirect taxonomy in the field of RL, and provide insight for the connection of the two families of RL algorithm. The direct and indirect RL are defined as follows,
\begin{definition}\label{mydefinition1}
(Direct RL). Direct RL finds the optimal policy by directly maximizing the state-value function, for $\forall s \in \mathcal{S}$.
\end{definition}
\begin{definition}\label{mydefinition2}
(Indirect RL). Indirect RL finds the optimal policy by solving the necessary and sufficient condition from Bellman's principle of optimality, e.g., the Bellman's equation, for $\forall s \in \mathcal{S}$.
\end{definition}
Different from the value-based and policy-based taxonomy, both the direct and indirect RL could have a parameterized policy. In the following analysis, we solve their policies from their definitions by gradient descent method respectively and explore the difference of their policy gradient.

\subsection{Direct RL}
By the definition, direct RL seeks to find $\pi_\theta$ which maximizes the state-value function for every state in $\mathcal{S}$. However, due to the limited fitting ability of the approximation function, current direct RL algorithms usually maximize the expected value over the initial state distribution
\begin{equation}\label{eq.direct_objective_function}
    J(\theta)=\mathbb{E}_{s_0 \sim d^{0}}\left\{v^{\pi_{\theta}}(s_0)\right\}=\sum_{s_0} d^{0}(s_0) v^{\pi_{\theta}}(s_0).
\end{equation}
By policy gradient theorem \cite{sutton2000policy}, the update gradient for the policy function is
\begin{equation}
\begin{aligned}
\nabla_{\theta}J(\theta) &=\nabla_{\theta} \sum_{s_{0}} d^{0}\left(s_{0}\right) v^{\pi}\left(s_{0}\right)\\
& =\sum_{s} \sum_{t=0}^{\infty} \gamma^{t} d^{t}\left(s |\pi_{\theta}\right) \sum_{a} \nabla_{\theta} \pi_{\theta}(a | s) q^{\pi_{\theta}}(s, a),
\end{aligned}  
\end{equation}
where
\begin{equation}\label{eq.t_step_state_distribution}
\begin{aligned}
     d^{t}\left(s |\pi_{\theta}\right)&=\sum_{s_0} d^0(s_0)  \sum_{a_0} \pi_{\theta}(a_0|s_0)\sum_{s_1} p(s_1|s_0,a_0)\ldots \sum_{a_{t-1}} \pi_{\theta}(a_{t-1}|s_{t-1}) p(s|s_{t-1},a_{t-1})\\
     &=\sum_{s_0} d^0(s_0) p_{\pi_{\theta}}^{t}\left(s | s_{0}\right)
\end{aligned}
\end{equation}
is the state distribution at time $t$ starting from the initial state distribution $d^0$ and then following $\pi_{\theta}$. We denote $p_{\pi_{\theta}}^{t}\left(s | s_{0}\right)$, called $t$-step transition function, as the probability of transferring from $s_0$ to $s$ by $t$ steps following the policy $\pi_{\theta}$. Defining the discounted visiting frequency (DVF) 
\begin{equation}\label{eq.DVF}
d^{\gamma}(s|\pi_{\theta})=\sum_{t=0}^{\infty} \gamma^{t} d^{t}\left(s |\pi_{\theta}\right),
\end{equation}
then the direct policy gradient can be expressed as
\begin{equation}\label{eq.direct_policy_gradient}
\begin{aligned}
    \nabla_{\theta}J(\theta)&=\sum_{s} d^{\gamma}(s|\pi_{\theta}) \sum_{a} \nabla_{\theta} \pi_{\theta}(a | s) q^{\pi_{\theta}}(s, a)\\
    &   =\sum_{s} d^{\gamma}(s|\pi_{\theta}) \sum_{a} \nabla_{\theta} \pi_{\theta}(a | s) \sum_{s'} \left[r(s,a)+\gamma v^{\pi_{\theta}}(s')\right]\\
    &   = \mathbb{E}_{s \sim d^{\gamma} , a \sim \pi_{\theta}, s' \sim p}\left\{\nabla_{\theta} \log \pi_{\theta}(a | s)\left[r\!+\!\gamma v^{\pi_{\theta}}(s')\right]\right\}
\end{aligned}
\end{equation}
where the second equation holds from the relationship between the action-value and state-value function. Core procedure of direct RL is shown in Algorithm \ref{alg:vanilla}.

We notice from \eqref{eq.direct_policy_gradient} that the DVF shows up in the state distribution of the PG, however, the properties of the DVF are not clear, which makes it difficult to approximate the PG in practice. Current methods usually substitute it with a state set sampled by $\pi_{\theta}$, but that would cause the bais in the PG \cite{thomas2014bias}. To make clear of the DVF, in the following, we focus on analyzing its properties. We find this is rather important for comparison with the indirect policy gradient.
\begin{algorithm}[tb]
   \caption{Direct reinforcement learning}
   \label{alg:vanilla}
\begin{algorithmic}
   \STATE {\bfseries Initialize:} $\theta$
   \REPEAT
   \STATE $\theta \gets \textbf{Optimizer}(\nabla_{\theta}J(\theta), \theta)$
   \UNTIL{Convergence}
\end{algorithmic}
\end{algorithm}
In practical applications, it is usually intractable to obtain the analytical form of the DVF for each given $\pi_{\theta}$ because of the unknown environment dynamics. And it is also non-trivial to obtain samples that obey it. But from the assumption \ref{myassumption1} we know that samples that obey the stationary state distribution under $\pi_{\theta}$ can be easily accessed as long as we run the policy until the states become stationary. Therefore, we aim to find the relationship between the DVF and the stationary state distribution. We make the following two propositions.
\begin{proposition}\label{myproposition1}
When $d^0(s)=d^{\pi}(s)$ holds for $\forall s\in \mathcal{S}$, then
\begin{equation}
    \begin{aligned}
    d^{\gamma}(s|\pi) 
    =\frac{1}{1-\gamma} d^{\pi}(s).
    \end{aligned}
\end{equation}
\end{proposition}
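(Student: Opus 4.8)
The plan is to work in the vector notation of \eqref{eq.vector_form} and exploit the defining invariance property of the stationary distribution. The key observation is that once the initial distribution coincides with $d^{\pi}$, the state distribution can never drift away from it, so every term $d^{t}(s|\pi)$ in the series \eqref{eq.DVF} collapses to the single value $d^{\pi}(s)$, after which the discounted sum is just a geometric series.

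First I would rewrite the $t$-step marginal \eqref{eq.t_step_state_distribution} in matrix form. Since the $(i,j)$ entry of $\bm{p_\pi}$ is $p_{\pi}(s^j|s^i)$, the $t$-step transition probability $p_{\pi}^{t}(s|s_0)$ is the $(s_0,s)$ entry of $\bm{p_\pi}^{t}$, and therefore the column vector $\bm{d^t}$ of the marginals $d^{t}(s|\pi)$ satisfies $\bm{d^t}=(\bm{p_\pi}^{\top})^{t}\bm{d^0}$. Next I would invoke the hypothesis $\bm{d^0}=\bm{d^\pi}$ together with the stationarity equation \eqref{eq.sdp}, namely $\bm{d^\pi}=\bm{p_\pi}^{\top}\bm{d^\pi}$, and argue by induction on $t$ that $\bm{d^t}=\bm{d^\pi}$ for every $t\ge 0$: the base case is the hypothesis itself, and the inductive step is the single line $\bm{d^{t+1}}=\bm{p_\pi}^{\top}\bm{d^t}=\bm{p_\pi}^{\top}\bm{d^\pi}=\bm{d^\pi}$. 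Finally I would substitute $d^{t}(s|\pi)=d^{\pi}(s)$ into the definition \eqref{eq.DVF} and sum the geometric series, using $0\le\gamma<1$ to obtain $d^{\gamma}(s|\pi)=d^{\pi}(s)\sum_{t=0}^{\infty}\gamma^{t}=\frac{1}{1-\gamma}d^{\pi}(s)$.

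There is no genuine technical obstacle here; the proof is short, and its only real content is the invariance induction. The one place to be careful is the transpose bookkeeping, because both the stationarity condition \eqref{eq.sdp} and the marginal recursion involve $\bm{p_\pi}^{\top}$ rather than $\bm{p_\pi}$, so a transpose slip would break the inductive step. It is also worth stating explicitly that the geometric series converges precisely because $\gamma<1$, so the result is implicitly confined to the discounted setting; meanwhile Assumption \ref{myassumption1} guarantees that $d^{\pi}$ exists and is unique, which is what makes the hypothesis $d^0=d^{\pi}$ well posed in the first place.
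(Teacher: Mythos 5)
Your proposal is correct and follows essentially the same route as the paper's own proof: both pass to the vector form, show that $\bm{d^{t|\pi}}=(\bm{p_{\pi}}^{\top})^{t}\bm{d^{\pi}}=\bm{d^{\pi}}$ for all $t\ge 0$ (you phrase this as an explicit induction, the paper iterates the stationarity equation directly), and then sum the geometric series. No substantive difference.
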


\begin{proof}
Similar to \eqref{eq.vector_form}, define $\bm{d^0}$, $\bm{d^{\pi}}$, $\bm{d^{t|\pi}}$, $\bm{d^{\gamma|\pi}}$ as the vector form of state distributions $d^0(\cdot)$, $d^{\pi}(\cdot)$, $d^t(\cdot|\pi)$ and $d^{\gamma}(\cdot|\pi)$, and define the $t$-step transition matrix $\bm{p^t_{\pi}}$ using $p^t_{\pi}(\cdot|\cdot)$ accordingly. From \eqref{eq.sdp} and \eqref{eq.t_step_state_distribution}, when $\bm{d^0}=\bm{d^{\pi}}$,
\begin{equation}\label{eq.prop1_stept}
\bm{d^{t|\pi}} = (\bm{p^t_{\pi}})^{\top} \bm{d^0} = (\bm{p_{\pi}}^{\top})^t \bm{d^0}=(\bm{p_{\pi}}^{\top})^t \bm{d^{\pi}}=\bm{d^{\pi}}, \forall t\ge 0,
\end{equation}
where the first equation holds by \eqref{eq.t_step_state_distribution}. The second one holds from the relationship between the $t$-step transition matrix $\bm{p^t_{\pi}}$ and the $1$-step transition matrix $\bm{p_{\pi}}$, i.e., $(\bm{p^t_{\pi}})^{\top} = (\bm{p_{\pi}^{\top}})^t$. The third equation is from our condition. And the last one holds from the definition of stationary state distribution \eqref{eq.sdp}. After that, from the definition of DVF \eqref{eq.DVF},
\begin{equation}
\bm{d^{\gamma|\pi}} =\sum_{t=0}^{\infty} \gamma^{t} \bm{d^{t|\pi}}
=\sum_{t=0}^{\infty} \gamma^{t}\bm{d^{\pi}} 
=\bm{d^{\pi}} \sum_{t=0}^{\infty} \gamma^{t}
=\frac{1}{1-\gamma} \bm{d^{\pi}}.
\end{equation}
where the first equation is the DVF definition, the second equation is the conclusion from the equation \eqref{eq.t_step_state_distribution}, the third one is by the time-independent property of the stationary state distribution, and the last equation holds by the character of geometric series.
\end{proof}

Proposition \ref{myproposition1} tells us that the DVF is proportional to the stationary state distribution, given that the initial state distribution is set to be the stationary state distribution. To give their relationship under arbitrary initial state distribution, we first introduce the following lemma about the $t$-step transition function $p_{\pi}^{t}(s|s_0)$, which is also the state distribution in time step $t$ starting from $s_0$ and then following $\pi$.
\begin{lemma}\label{mylemma1}
If the one step state transition function $p_{\pi}$ of a policy $\pi$ satisfies the Assumption \ref{myassumption1}, then $p_{\pi}^{t}\left(s | s_{0}\right)$ converges to the stationary state distribution of $\pi$ as time goes by, i.e.,
\begin{equation}\label{eq.lemma1_1}
    \lim_{t \rightarrow \infty}p_{\pi}^{t}\left(s | s_{0}\right) = d^{\pi}(s), \forall s,s_0 \in \mathcal{S},
\end{equation}
In addition, the average of the first $N$ timesteps of $p_{\pi}^{t}\left(s | s_{0}\right)$ also converges to the stationary state distribution of $\pi$ \cite{ross1996stochastic}, i.e.,
\begin{equation}\label{eq.lemma1_2}
    \lim_{N \rightarrow \infty} \frac{\sum_{t=1}^{N} p_{\pi}^{t}\left(s | s_{0}\right) }{N} = d^{\pi}(s), \forall s,s_0 \in \mathcal{S}.
\end{equation}
\end{lemma}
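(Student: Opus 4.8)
The plan is to recast both claims as statements about powers of the one-step transition matrix $\bm{p_\pi}$ and then invoke Perron--Frobenius theory. Since $p_\pi^t(s^j|s^i)$ is exactly the $(i,j)$ entry of $(\bm{p_\pi})^t$ (using $\bm{p^t_\pi}=(\bm{p_\pi})^t$, as already noted in the proof of Proposition \ref{myproposition1}), equation \eqref{eq.lemma1_1} is equivalent to the matrix limit
\begin{equation*}
    \lim_{t\to\infty}(\bm{p_\pi})^t = \bm{1}\,(\bm{d^{\pi}})^{\top},
\end{equation*}
where $\bm{1}$ is the all-ones vector; the $(i,j)$ entry of the right-hand side is $d^{\pi}(s^j)$, independent of the starting index $i$, which is precisely the content of \eqref{eq.lemma1_1}.

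First I would record the two eigen-relations that hold for any finite stochastic matrix: $\bm{p_\pi}\bm{1}=\bm{1}$ (rows sum to one), so $1$ is an eigenvalue with right eigenvector $\bm{1}$, and $\bm{p_\pi}^{\top}\bm{d^{\pi}}=\bm{d^{\pi}}$ by the definition of the stationary distribution \eqref{eq.sdp}, so $\bm{d^{\pi}}$ is the corresponding left eigenvector. Next I would use Assumption \ref{myassumption1}: indecomposability (irreducibility) together with nonperiodicity (aperiodicity) makes $\bm{p_\pi}$ primitive, i.e.\ some power $(\bm{p_\pi})^{k}$ is entrywise strictly positive. By the Perron--Frobenius theorem for primitive matrices, the eigenvalue $1$ is simple and strictly dominant, so every other eigenvalue $\lambda$ satisfies $|\lambda|<1$. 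Writing the Jordan decomposition of $\bm{p_\pi}$ and raising it to the $t$-th power, the eigenvalue-$1$ block contributes the constant rank-one projector $\bm{1}(\bm{d^{\pi}})^{\top}$ (idempotent since $(\bm{d^{\pi}})^{\top}\bm{1}=\sum_s d^{\pi}(s)=1$), while the entries coming from every other Jordan block grow at most polynomially in $t$ yet are multiplied by $\lambda^{t}$ with $|\lambda|<1$, hence vanish as $t\to\infty$. This yields the matrix limit above, and therefore \eqref{eq.lemma1_1}.

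With \eqref{eq.lemma1_1} in hand, \eqref{eq.lemma1_2} is immediate from the elementary fact that the Ces\`aro average of a convergent sequence converges to the same limit: fixing $s$ and $s_0$, the real sequence $a_t:=p_\pi^t(s|s_0)$ tends to $d^{\pi}(s)$, so $\frac{1}{N}\sum_{t=1}^{N}a_t\to d^{\pi}(s)$ as well. One may alternatively cite \cite{ross1996stochastic} for this part, as indicated in the statement. I would also remark that the averaged statement is in fact weaker: it holds under irreducibility and positive recurrence alone, without aperiodicity, since even a periodic chain has Ces\`aro-convergent transition probabilities.

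The main obstacle is the spectral gap, i.e.\ showing that $1$ is the unique eigenvalue of modulus one and is simple; everything else is bookkeeping. This is exactly where nonperiodicity is indispensable---without it $\bm{p_\pi}$ can carry eigenvalues on the unit circle other than $1$ (roots of unity for a periodic chain), the powers $(\bm{p_\pi})^t$ then oscillate, and \eqref{eq.lemma1_1} fails even though \eqref{eq.lemma1_2} survives. If one prefers to avoid eigenvalue arguments, the same conclusion follows from a coupling / Dobrushin-coefficient bound: primitivity supplies a $k$ for which $(\bm{p_\pi})^{k}$ admits a uniformly positive minorant, the associated contraction coefficient is strictly below one, and iterating contracts any two initial distributions together geometrically in total variation, again forcing convergence to the unique stationary $d^{\pi}$.
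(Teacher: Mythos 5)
Your proof is correct, but note that the paper itself does not prove this lemma at all: it simply states it as a known result from the theory of Markov chains and cites Ross's \emph{Stochastic Processes}, so any self-contained argument is already ``a different route.'' Your Perron--Frobenius argument is the standard textbook proof and is sound in every step: the identification of $p_\pi^t(s^j|s^i)$ with the $(i,j)$ entry of $(\bm{p_\pi})^t$, the two eigen-relations $\bm{p_\pi}\bm{1}=\bm{1}$ and $\bm{p_\pi}^{\top}\bm{d^{\pi}}=\bm{d^{\pi}}$, primitivity from irreducibility plus aperiodicity in the finite-state setting, simplicity and strict dominance of the eigenvalue $1$, the rank-one spectral projector $\bm{1}(\bm{d^{\pi}})^{\top}$ (correctly normalized since $(\bm{d^{\pi}})^{\top}\bm{1}=1$), and the decay of the remaining Jordan blocks. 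The Ces\`aro step for \eqref{eq.lemma1_2} is immediate as you say, and your remark that \eqref{eq.lemma1_2} survives without aperiodicity while \eqref{eq.lemma1_1} does not is exactly the right observation---it explains why the paper's Proposition \ref{myproposition2}, which only uses \eqref{eq.lemma1_2}, needs less than the full force of Assumption \ref{myassumption1}. Two minor points: positive recurrence is automatic for a finite irreducible chain, so that hypothesis is redundant in this setting; and your alternative coupling/Dobrushin argument is a perfectly good substitute that additionally yields a geometric rate, which the eigenvalue argument also gives but which neither the lemma nor the paper requires.
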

The lemma is from the field of random process, which revels the relationship between the stationary state distribution and the $t$-step state distribution from arbitrary initial state in the long term. That means the state distribution in the long term is independent with the initial state. So in the following Proposition, we try to introduce more long-term considerations in the DVF by diminishing the effect of the discount factor.

\begin{proposition}\label{myproposition2}
When $\gamma$ approaches 1,
\begin{equation}\label{DVF:2}
    \begin{aligned}
\lim _{\gamma \rightarrow 1}(1-\gamma) d^{\gamma}(s|\pi) = d^{\pi}(s), \forall s\in \mathcal{S}.
    \end{aligned}
\end{equation}
\end{proposition}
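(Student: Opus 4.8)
The plan is to normalize the discounted visiting frequency and reduce the claim to a standard Abelian (Abel summation) theorem for power series, using Lemma \ref{mylemma1} to control the summands in the regime $t\to\infty$. First I would expand the left-hand side via the definition \eqref{eq.DVF} of the DVF,
\[
(1-\gamma)\,d^{\gamma}(s|\pi)=(1-\gamma)\sum_{t=0}^{\infty}\gamma^{t}\,d^{t}(s|\pi),
\]
and record the elementary identity $(1-\gamma)\sum_{t=0}^{\infty}\gamma^{t}=1$ for $\gamma\in[0,1)$. This lets me subtract off the target value cleanly and write the error as a weighted average of fluctuations:
\[
(1-\gamma)\,d^{\gamma}(s|\pi)-d^{\pi}(s)=(1-\gamma)\sum_{t=0}^{\infty}\gamma^{t}\bigl(d^{t}(s|\pi)-d^{\pi}(s)\bigr).
\]

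Next I would verify that the summands vanish as $t\to\infty$. Since $\mathcal{S}$ is finite and $d^{t}(s|\pi)=\sum_{s_0}d^{0}(s_0)\,p_{\pi}^{t}(s|s_0)$ is a finite convex combination, the pointwise limit \eqref{eq.lemma1_1} of Lemma \ref{mylemma1} passes through the sum over $s_0$ to give $\lim_{t\to\infty}d^{t}(s|\pi)=d^{\pi}(s)$. Hence $\varepsilon_{t}:=d^{t}(s|\pi)-d^{\pi}(s)$ is a bounded sequence converging to $0$, which is the only input the Abelian argument needs.

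With that in hand, the core step is the head/tail split. Fix $\eta>0$ and choose $T$ with $|\varepsilon_{t}|<\eta$ for all $t\ge T$. The head $\sum_{t=0}^{T-1}\gamma^{t}\varepsilon_{t}$ is a fixed finite sum of bounded terms, so the prefactor $(1-\gamma)$ drives it to $0$ as $\gamma\to1^{-}$; the tail obeys $\bigl|(1-\gamma)\sum_{t\ge T}\gamma^{t}\varepsilon_{t}\bigr|\le\eta\,(1-\gamma)\sum_{t\ge T}\gamma^{t}=\eta\,\gamma^{T}\le\eta$. Taking $\limsup_{\gamma\to1^{-}}$ and then letting $\eta\to0$ forces the entire expression to $0$, which establishes \eqref{DVF:2} for every $s\in\mathcal{S}$.

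The main obstacle is precisely the interchange of $\lim_{\gamma\to1^{-}}$ with the infinite summation: a naive termwise limit fails because $\sum_{t}\gamma^{t}d^{t}(s|\pi)$ itself diverges as $\gamma\to1$, so the normalizing factor $(1-\gamma)$ must be traded against that divergence, and the head/tail decomposition is what makes this trade rigorous. (As an alternative that sidesteps the series manipulation, one could argue in vector form via $(1-\gamma)\bm{d^{\gamma|\pi}}=(1-\gamma)(I-\gamma\bm{p_{\pi}}^{\top})^{-1}\bm{d^0}$ and identify the limit with the ergodic projection $\bm{d^{\pi}}\bm{1}^{\top}$ applied to $\bm{d^0}$, yielding $\bm{d^{\pi}}$ because $\bm{1}^{\top}\bm{d^0}=1$; but the scalar Abelian route above is the most self-contained and aligns best with the form of Lemma \ref{mylemma1}.)
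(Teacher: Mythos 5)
Your proof is correct, but it takes a genuinely different route from the one in the paper. The paper first normalizes by $(1-\gamma)\sum_{s}d^{\gamma}(s|\pi)=1$, rewrites the Abel mean as a ratio of partial sums, interchanges $\lim_{\gamma\rightarrow 1}$ with $\lim_{N\rightarrow\infty}$ (invoking the Moore--Osgood theorem), and thereby converts the expression into the Ces\`aro average $\lim_{N\rightarrow\infty}\frac{1}{N+1}\sum_{t=0}^{N}d^{t}(s|\pi)$, which it evaluates using the Ces\`aro-limit part of Lemma \ref{mylemma1}, i.e.\ \eqref{eq.lemma1_2}. You instead subtract the target using $(1-\gamma)\sum_{t}\gamma^{t}=1$, invoke only the pointwise-limit part \eqref{eq.lemma1_1} to get $\varepsilon_{t}=d^{t}(s|\pi)-d^{\pi}(s)\rightarrow 0$, and run a direct Abel-summation head/tail estimate. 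What your version buys is precisely a rigorous treatment of the step that is most delicate in the paper: the interchange of the two limits requires a uniformity condition that the paper does not verify, whereas your head/tail split replaces that interchange with an explicit $\limsup\le\eta$ bound and is fully self-contained. What the paper's version buys is a transparent conceptual bridge from the discounted (Abel) average to the time average (Ces\`aro), which makes the role of the ergodic theorem \eqref{eq.lemma1_2} explicit; your argument is the classical ``Abel summability follows from convergence'' theorem specialized to this sequence. Your parenthetical resolvent-based alternative, $(1-\gamma)(I-\gamma\bm{p_{\pi}}^{\top})^{-1}\bm{d^0}\rightarrow\bm{d^{\pi}}\bm{1}^{\top}\bm{d^0}=\bm{d^{\pi}}$, is also sound and is a third, linear-algebraic route not taken by the paper.
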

\begin{proof}
\begin{align}
\lim _{\gamma \rightarrow 1}(1-\gamma) d^{\gamma}(s|\pi) & = \lim _{\gamma \rightarrow 1} \frac{(1-\gamma) d^{\gamma}(s|\pi)}{(1-\gamma) \sum_{s} d^{\gamma}(s|\pi)}\nonumber\\
& = \lim _{\gamma \rightarrow 1} \lim _{N \rightarrow \infty} \frac{\sum_{t=0}^{N} \gamma^{t} d^{t}\left(s |\pi\right)}{\sum_{s} \sum_{t=0}^{N} \gamma^{t} d^{t}\left(s |\pi\right)}\nonumber\\
& =\lim _{N \rightarrow \infty} \frac{\sum_{t=0}^{N} d^{t}\left(s |\pi\right)}{\sum_{t=0}^{N} \sum_{s} d^{t}\left(s |\pi\right)}\nonumber\\
& =\lim _{N \rightarrow \infty} \frac{\sum_{t=0}^{N} d^{t}\left(s |\pi\right)}{N+1}\nonumber\\
& = \lim _{N \rightarrow \infty} \frac{\sum_{t=0}^{N} \sum_{s_{0}} d^{0}\left(s_{0}\right) p_{\pi}^{t}\left(s | s_{0}\right)}{N+1}\nonumber\\
& = \sum_{s_{0}} d^{0}\left(s_{0}\right) \lim _{N \rightarrow \infty} \frac{\sum_{t=0}^{N} p_{\pi}^{t}\left(s | s_{0}\right)}{N+1}\nonumber\\
& = \sum_{s_{0}} d^{0}\left(s_{0}\right) d^{\pi}(s)\nonumber\\
& =d^{\pi}(s)\nonumber.
\end{align}
where the first equation holds because
\begin{equation}
\nonumber
(1-\gamma)\sum_{s}d^{\gamma}(s|\pi) = (1-\gamma)\sum_{s} \sum_{t=0}^{\infty}\gamma^t d^t(s|\pi) = (1-\gamma) \sum_{t=0}^{\infty}\gamma^t\sum_{s} d^t(s|\pi)=(1-\gamma) \sum_{t=0}^{\infty}\gamma^t \cdot 1 = 1
\end{equation}
The second equation is to substitute DVF with its definition, i.e., $d^{\gamma}(s|\pi) = \lim_{N\rightarrow\infty}\sum_{t=0}^N\gamma^t d^t(s|\pi)$. The third equation interchanges the two limiting operations, which is guaranteed by the Moore-Osgood theorem. Then the fourth equation calculates the summation in the denominator because the sum of $t$-step state distribution over state space equals to 1 for all $t\ge0$. In the fifth equation, we further span the $t$-step state distribution using the $t$-step transition function $p^t_{\pi}$, whose properties are exhibited in the Lemma \ref{mylemma1}. The sixth equation interchanges the limiting operation and a sum over a finite set, which holds obviously. Finally, the last two equations hold by the equation \eqref{eq.lemma1_2} described in the Lemma \ref{mylemma1}.
\end{proof}

By Proposition \ref{myproposition2}, we can see that when we introduce more long-term effects in the DVF, i.e., letting the discount factor approach 1, the DVF is also proportional to the stationary state distribution no matter what the initial state distribution is. To the best knowledge of us, we are the first to draw the theoretical connection between the DVF and the stationary state distribution.
With Proposition \ref{myproposition1} and \ref{myproposition2}, the direct policy gradient becomes
\begin{equation}\label{eq.direct_pg}
    \begin{aligned}
    &\nabla_{\theta} J(\theta) \approx \frac{1}{1-\gamma} \sum_{s} d^{\pi_{\theta}}(s) \sum_{a} \nabla_{\theta} \pi_{\theta}(a | s) q^{\pi_{\theta}}(s, a)\\
    &    \propto \mathbb{E}_{s \sim d^{\pi_{\theta}} , a \sim \pi_{\theta}}\left\{\nabla_{\theta} \log \pi_{\theta}(a | s) q^{\pi_{\theta}}(s, a)\right\}\\
    &    = \mathbb{E}_{s \sim d^{\pi_{\theta}} , a \sim \pi_{\theta}, s' \sim p}\left\{\nabla_{\theta} \log \pi_{\theta}(a | s)\left[r\!+\!\gamma v^{\pi_{\theta}}(s')\right]\right\}
    \end{aligned}
\end{equation}
where the last equation holds from the relationship between the action-value function and the state-value function \eqref{eq.def_action_value}. These properties of the DVF, we will see later, are the key to formulate the equivalence of the direct and indirect policy gradient. An obvious issue in the direct PG is that the true value function $v^{\pi_{\theta}}$ is not accessible. Although we can always use Monte Carlo method to estimate it as in the REINFORCE algorithm, we can also learn an approximate value function anyway to derive the actor-critic algorithm.
The approximate value function and the policy are regarded as the critic and the actor, which are updated alternatively until convergence. The objective of the critic updating is usually chosen as mean squared error, i.e.
\begin{equation}
    \begin{aligned} 
    J(w) = \mathbb{E}_{s\sim d^0}\left\{(G^{\pi_{\theta}}(s)-V(s,w))^2 \right\},
    \end{aligned}
\end{equation}
where $G^{\pi_{\theta}}(s)$ is the update target, which is used to approximate the true value function and can be constructed by Monte Carlo methods. It calculates the target using the definition of state-value function \eqref{eq.def_state_value}, i.e.,
\begin{equation}\label{eq.value_target1}
    G^{\pi_{\theta}}(s)=\lim_{T \rightarrow \infty}\left[\sum_{l=t}^{T} \gamma^{l-t} r_l|s_t= s\right],
\end{equation}
The temporal difference methods can also be employed to construct the value target from the results of the self-consistency condition \eqref{eq.self_consistent}, i.e.,
\begin{equation}\label{eq.value_target2}
    G^{\pi_{\theta}}(s)=r(s, a, s')+ \gamma V(s',w_{\text{old}})
\end{equation}
Then the gradient of the critic becomes
\begin{equation}\label{eq.critic_gradient}
    \begin{aligned} 
    \nabla_{w}J(w) \propto - \mathbb{E}_{s\sim d^{\pi_{\theta}}}\left\{(G^{\pi_{\theta}}(s)-V(s,w))\nabla_w V(s, w) \right\}
    \end{aligned}
\end{equation}
By using the value function approximation in the policy gradient estimation, the actor-critic architecture can be derived from direct RL, which is shown in Figure \ref{direct_to_actor_critic}.

\begin{figure}[ht]
\begin{center}
\centerline{\includegraphics[width=6cm]{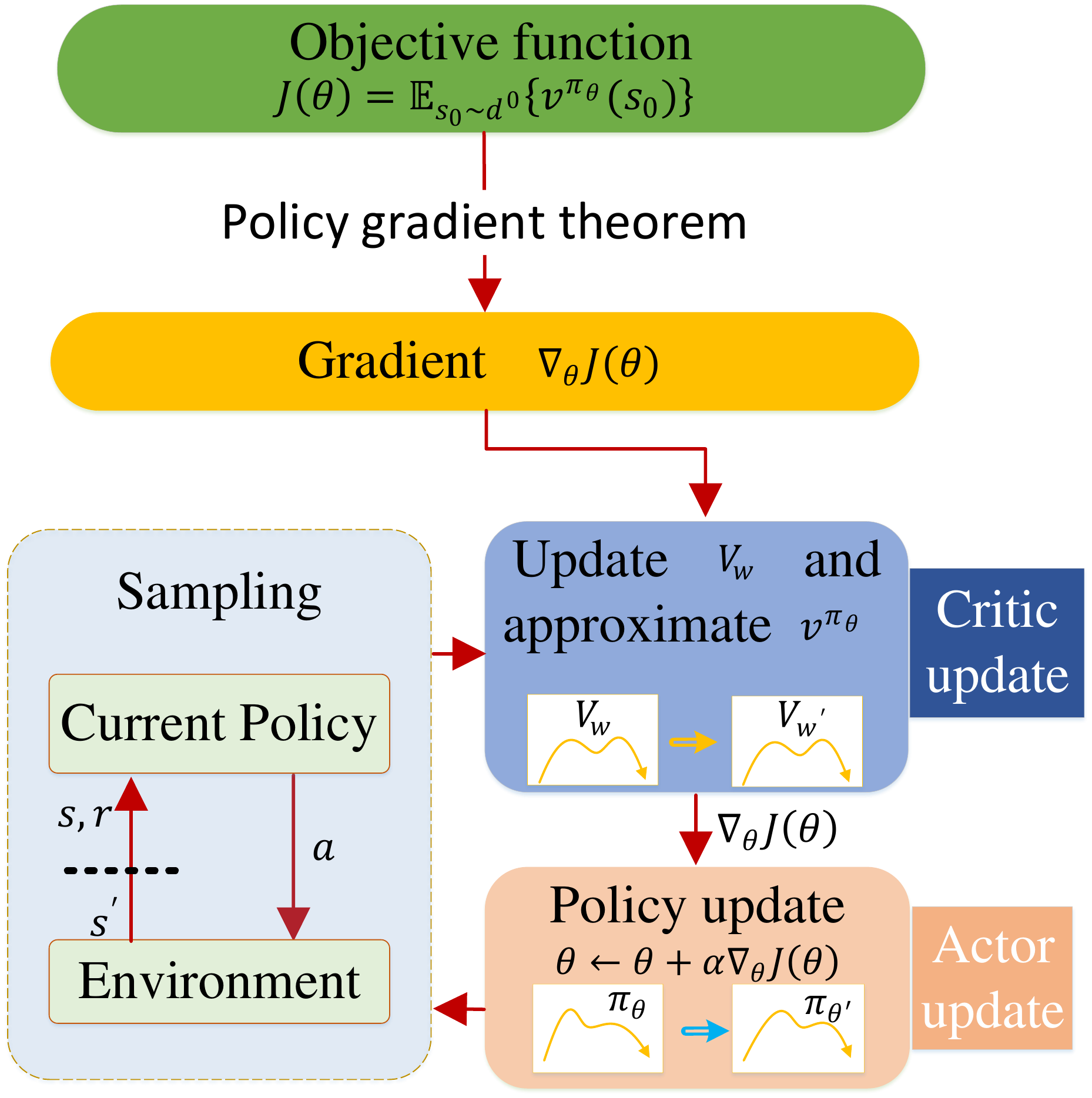}}
\caption{Actor-critic architecture derived from direct RL.}
\label{direct_to_actor_critic}
\end{center}
\end{figure}

\subsection{Indirect RL}
Indirect RL seeks to find the solution of the optimality condition, which is the Bellman equation \eqref{eq.vector_bellman}, and then acquires the optimal policy $\pi^*$ indirectly using the solved optimal value function $v^*$ by the following equation
\begin{equation}
    \pi^* = \arg\max_{\pi}\left[\bm{r_{\pi}}+\gamma \bm{p_{\pi}} \bm{v^*} \right]
\end{equation}
To get the solution of the Bellman equation, the policy iteration algorithm, a typical indirect method in the tabular case, starts with an arbitrary policy $\pi_0$ and generates a sequence of new policies $\pi_1, \pi_2, \cdots$ by implementing the PEV and PIM steps alternatively. Given a policy $\pi$, the PEV step aims to compute $\bm{v^{\pi}}$, which is the only fixed point of the self-consistency condition \eqref{eq.self_consistent}, that is, the solution of the equation
\begin{equation}
    \mathcal{T}_{\pi} \bm{v} = \bm{v}
\end{equation}
By the properties of the fixed policy iteration, the solution can be obtained by keeping performing $\bm{v}=\mathcal{T}_{\pi} \bm{v}$ until its convergence. On the other hand, the PIM step is in charge of computing an optimal policy $\pi'$ based on the value function computed in the last iteration, i.e., finding a new policy that satisfies
\begin{equation}
    \mathcal{T}_{\pi'} \bm{v^{\pi}} = \mathcal{T}_{*} \bm{v^{\pi}}.
\end{equation}
From the theory of dynamic programming, it is guaranteed that the algorithm terminates with the optimal value function $\bm{v^*}$ and the optimal policy $\pi^{*}$.

Inspired by the theoretical results in the tabular value and policy case, we derive a more general indirect algorithm with function approximations, from which the indirect policy gradient is developed. Given a parameterized policy $\pi_{\theta}$, the PEV step aims to find value function parameters $w$ that satisfy $\mathcal{T}_{\pi_{\theta}}\bm{V_w}=\bm{V_w}$ by minimizing the distance between the update target $\mathcal{T}_{\pi_{\theta}}\bm{V_{w_{\text{old}}}}$ and $\bm{V_w}$ iteratively, where the distance is chosen as a Euclidean norm weighted by $d^0$, denoted by $\lVert \bm{x} \rVert_{d^0}$, where
$\lVert \bm{x} \rVert_{d^0}=\sqrt{\sum_{i=1}^n d^0(s^i) x_i^2}$. Therefore the PEV objective becomes
\begin{equation}
    \begin{aligned} 
    J(w) &= \lVert \mathcal{T}_{\pi_{\theta}}\bm{V_{w_{\text{old}}}} - \bm{V_w} \rVert_{d^0}^2\\
    &\approx \mathbb{E}_{s\sim d^0}\left\{(G^{\pi_{\theta}}(s)-V(s, w))^2 \right\},
    \end{aligned}
\end{equation}
where we use \eqref{eq.value_target2} in $G^{\pi_{\theta}}(s)$ to approximate the effect of self-consistency operator $\mathcal{T}_{\pi_{\theta}}$. The value gradient is the same as that in \eqref{eq.critic_gradient}.

Similarly, the PIM step seeks to minimize the distance between $\mathcal{T}_{\pi_{\theta}} \bm{V_w}$ and $\mathcal{T}_{*} \bm{V_w}$. However, here the distance cannot be chosen as the weighted Euclidean norm because we cannot estimate the Bellman operator $\mathcal{T}_*$ by samples as the self-consistency operator. So instead, we select a weighted Manhattan norm $\lVert \bm{x}\rVert_{d^0}$ as the distance defined as $\lVert \bm{x}\rVert=\sum_{i=1}^n d^0(s^i)|x_i|$. The policy objective function therefore becomes to minimize
\begin{equation}
\begin{aligned}
    J(\theta) &= \lVert \mathcal{T}_{\pi_{\theta}} \bm{V_w} - \mathcal{T}_{*} \bm{V_w} \rVert_{d^0}\\
    &= \mathbb{E}_{s\sim d^0}\left\{\left|(\mathcal{T}_{\pi_{\theta}} \bm{V_w})(s)-(\mathcal{T}_{*} \bm{V_w})(s)\right|\right\},
\end{aligned}
\end{equation}
where $(\mathcal{T}_{\pi_{\theta}} \bm{V_w})(s)$ and $(\mathcal{T}_{*} \bm{V_w})(s)$ denote the vector elements correspond to the state $s$. From the definition of the Bellman operator $\mathcal{T}_*$, it is obvious to see that $\mathcal{T}_{\pi_{\theta}}\bm{V_w}(s) \leq \mathcal{T}_{*} \bm{V_w}(s), \forall s\in \mathcal{S}$, and $\mathcal{T}_{*} \bm{V_w}(s)$ is a constant irrelevant to $\theta$. Therefore, we can remove the absolute operator and instead maximize the objective
\begin{equation}\label{PIM}
\begin{aligned}
    J(\theta)&=\mathbb{E}_{s\sim d^0}\left\{(\mathcal{T}_{\pi_{\theta}} \bm{V_w})(s)\right\} \\
    &=\mathbb{E}_{s\sim d^0}\left\{\mathbb{E}_{a\sim \pi_{\theta}(\cdot|s), s'\sim p(\cdot|s,a)}\left\{r+\gamma V(s', w)\right\}\right\}\\
    &=\mathbb{E}_{s\sim d^0, a\sim\pi_{\theta}, s'\sim p}\left\{r+\gamma V(s', w)\right\}\\
    \end{aligned}
\end{equation}
where the second equation holds by the definition of the self-consistency operator in \eqref{eq.operations}. Using the likelihood ratio trick, we can compute the indirect policy gradient, as shown below
\begin{equation}\label{eq.indirect_PG}
\begin{aligned}
    \nabla_{\theta} J(\theta)&=\nabla_{\theta}\mathbb{E}_{s\sim d^0, a\sim\pi_{\theta}, s'\sim p}\left\{r+\gamma V(s', w)\right\}\\
    &=\sum_{s}d^0(s)\sum_{a}\nabla_{\theta}\pi_{\theta}(a|s)\sum_{s'}p(s'|s,a)[r + \gamma V(s',w)]\\
    &=\sum_{s}d^0(s)\sum_{a}\pi_{\theta}(a|s)\sum_{s'}p(s'|s,a)\frac{\nabla_{\theta}\pi_{\theta}(a|s)}{\pi_{\theta}(a|s)}[r + \gamma V(s',w)]\\
    &=\sum_{s}d^0(s)\sum_{a}\pi_{\theta}(a|s)\sum_{s'}p(s'|s,a)\nabla_{\theta}\log\pi_{\theta}(a|s)[r + \gamma V(s',w)]\\
    &=\mathbb{E}_{s \sim d^{0}, a \sim \pi_{\theta}, s^{\prime} \sim p}\left\{\nabla_{\theta} \log \pi_{\theta}(a | s)\left[r+\gamma V\left(s^{\prime}, w\right)\right]\right\}
\end{aligned}
\end{equation}
where the first equation is from \eqref{PIM}. The second equation holds by expanding the expectation notation. The third one is the one uses the likelihood ratio trick, where we multiply the action probability $\pi_{\theta}(a|s)$ term and divide by it to construct the gradient of the logarithm function $\nabla_{theta}\log\pi_{\theta}(a|s)$ shown in the fourth equation. The last equation holds naturally by the notation of the expectation. The whole process of the indirect RL is shown in algorithm \ref{alg:Approximate policy iteration}.
Similar to direct RL, the actor-critic architecture can also be derived from indirect RL. Specifically, the PEV procedure in the indirect RL can be regarded as the update of the critic, while the PIM procedure can be considered as the update of the actor. The gradients of the critic and the actor are displayed in \eqref{eq.critic_gradient} and \eqref{eq.indirect_PG}. The actor-critic architecture derived from indirect RL is illustrated in Figure \ref{indirect_to_actor_critic}.

\begin{figure}[ht]
\begin{center}
\centerline{\includegraphics[width=6cm]{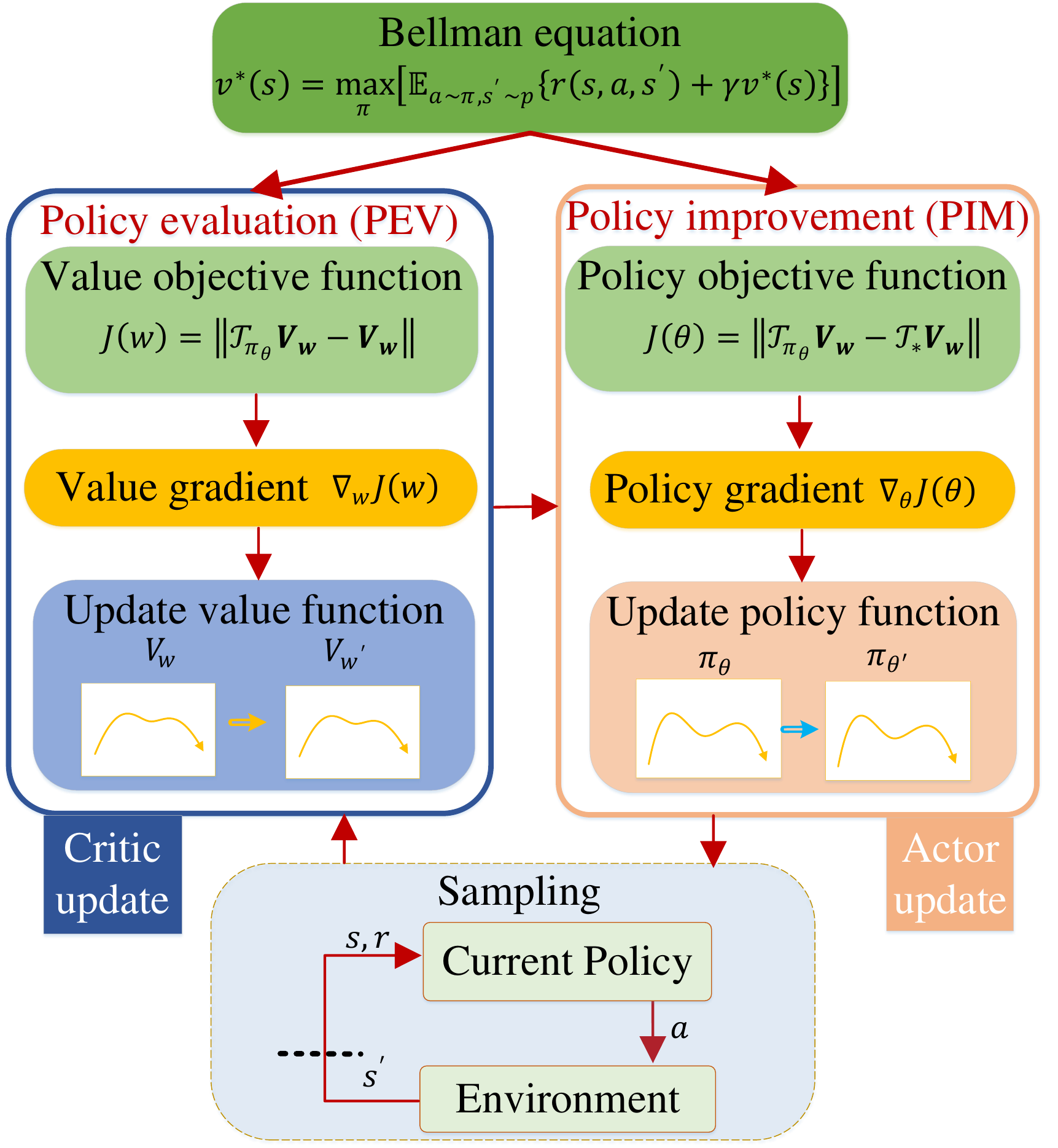}}
\caption{Actor-critic architecture derived from indirect RL.}
\label{indirect_to_actor_critic}
\end{center}
\end{figure}

\begin{algorithm}[tb]
   \caption{Indirect reinforcement learning}
   \label{alg:Approximate policy iteration}
\begin{algorithmic}
   \STATE {\bfseries Initialize:} $\theta$, $w$
   \REPEAT
   \STATE {\bfseries Step 1: Policy Evaluation}
   
   \setlength{\leftskip}{1em}
   \REPEAT
   
   \setlength{\leftskip}{1em}
   \STATE $w \gets \textbf{ValueOptimizer}(\nabla_{w}J(w), w, \theta)$
   \UNTIL {$w$ converges}
   
   \setlength{\leftskip}{0em}
   \STATE {\bfseries Step 2: Policy Improvement}
   
   \setlength{\leftskip}{1em}
   \REPEAT
   
   \setlength{\leftskip}{1em}
   \STATE $\theta \gets \textbf{PolicyOptimizer}(\nabla_{\theta}J(\theta), \theta, w)$
   \UNTIL {$\theta$ converges}
   
   \setlength{\leftskip}{0em}
   \UNTIL{$\theta, w$ converge}
\end{algorithmic}
\end{algorithm}

\subsection{Equivalence}
As we have introduced before, there are several works that have drawn connections between the policy gradient and the value-based methods in the framework of maximum entropy. While the policy gradient method is a typical direct method, the value-based method is actually a special class of indirect RL with an implicit greedy policy, which will be argued in the next section and be verified experimentally in section \ref{sec.experiment}. In this section, similarly, we establish the equivalence between the direct and indirect RL by studying the form of their respective value and policy gradients in the actor-critic architecture, so as to bridge the two families of RL algorithms. Because they have the same value gradient in the critic update, i.e., \eqref{eq.critic_gradient}, we focus on the differences in their actor updates, i.e., comparing the direct PG \eqref{eq.direct_pg} and the indirect PG \eqref{eq.indirect_PG}. Then, we are able to capture two main differences listed below:

\textbf{The state distribution in the policy gradient is different}: Although the direct and indirect policy gradient both seek to optimize the value function under the initial state distribution $d^0$, they have gradient with respect to different state distributions. For direct RL, the value function in its objective is not only a function of state, but also a function of policy parameters $\theta$. When taking gradient of it, we have to unroll it ceaseless until we get the form of \eqref{eq.direct_pg}. Due to the unroll effect, the direct policy gradient becomes an expectation under the DVF $d^{\gamma}$. On the other hand, the indirect policy gradient is an expectation under the initial state distribution.

\textbf{The value function in the policy gradient is different}: the difference is mainly caused by the unalike actor objective functions. The objective function of the direct RL is the true value function, which is a function of policy parameters. Its gradient retains itself on account of the unroll effect. In contrast, the objective function of indirect RL depends on the optimized value function in the PEV step. Then in the PIM step, it is no longer a function of policy and remains unchanged, leading to the approximate value function in the indirect PG.

Although there exist two differences, they can be unified into a policy gradient with the approximate value function and the stationary state distribution. For the direct policy gradient, we replace the true value function with the approximate value function, then choose the initial state distribution as the stationary state distribution or let $\gamma \rightarrow 1$ to turn the DVF to the stationary state distribution according to Proposition \ref{myproposition1} and \ref{myproposition2}. For the indirect policy gradient, we just choose the initial state distribution as the stationary state distribution. The equivalence of direct and indirect RL is shown in Figure \ref{equivalence}.

\begin{figure}[ht]
\begin{center}
\centerline{\includegraphics[width=6cm]{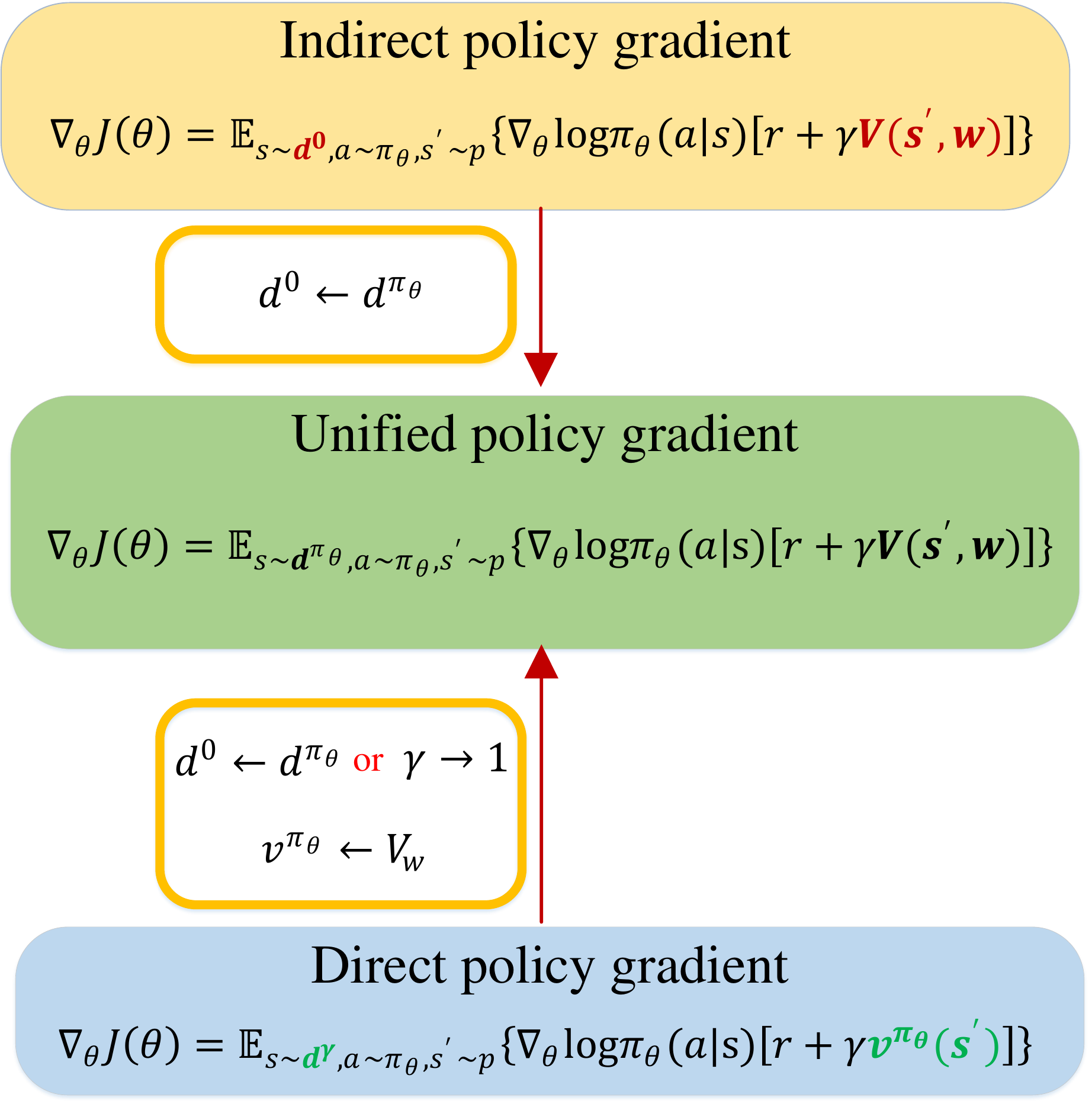}}
\caption{Equivalence of the direct and indirect policy gradient.}
\label{equivalence}
\end{center}
\end{figure}

\subsection{Algorithm classification}
In this section, we classify the recent mainstream RL algorithms by the proposed direct and indirect taxonomy against other taxonomies, value-based and policy-based, model-free and model-based. We first talk about the relationship between our taxonomy and the other two, and finally group the mentioned RL algorithms under these criteria.

For the relationship between our taxonomy and the value-based and policy-based taxonomy, we find that all the value-based methods belong to a special case of indirect method that have an implicit greedy tabular policy. For instance, the Q-learning and its ``deep" variants are in fact implementing one step PEV and one step PIM until converging to the solution of Bellman equation, where the PIM is to find a greedy policy secretly by maximizing the Q-value of the next state. The final solution is the only fixed point of the Bellman operator. While the Bellman operator is formulated by the one-step TD, the C51 and Retrace($\lambda$) try to construct an alternative contraction operator in the distributional and return-based domain, which also guarantees the (distributional) value function converges to the solution of the (distributional) Bellman equation. Likewise, these operators contain one-step PEV and one-step PIM as well, where the PIM step is designed to find a (near) greedy policy. On the other hand, the policy-based method can be either direct method or indirect method, depending on how it has been developed. To name a few direct policy-based methods, Vanilla PG and A3C are proposed with the support of policy gradient theorem; TRPO seeks to optimize a surrogate constrained problem whose objective is a lower bound of the original one \eqref{eq.direct_objective_function}; PPO further simplifies the TRPO problem to a clipped surrogate objective; DPG and its ``deep" variants such as DDPG, TD3, MVE are all put forward on the basis of deterministic policy gradient theorem, which give an analytical PG in deterministic domain. The common feature or motivation of these algorithms is that they intend to directly improve a certain objective about a long-term goal rather than to solve the Bellman optimality condition as does in the indirect policy-based methods. For example, the soft Q-learning constructs a fixed point iteration in the framework of entropy regularization to approach the solution of the entropy augmented Bellman equation. Different from the value-based methods, it owns a paramaterized policy. And different from the direct methods, the policy is optimized by minimizing the distance between the Boltzmann policy in each step, which is the optimal solution in each PIM step suggested by the framework. Similar idea can be found in SAC and DSAC. Still in the entropy framework, PCL finds an alternative necessary condition of the optimal solution. Based on that the update loss is established, realizing joint updates of the value and policy. Moreover, the whole class of ADP algorithms is developed and analyzed in the scope of solving Bellman equation, thus belongs to the indirect method. On the other hand, there is basically no straight relationship with the model-based and model-free taxonomy and ours since they are fundamentally two orthogonal classification criteria from distinctive point of view. The algorithm categorizations are shown in Table \ref{tab.classification1} and Table \ref{tab.classification2}. Most of these algorithms have been introduced in section \ref{sec.intro}.
\begin{table*}[htbp]
\captionsetup{justification=centering,labelsep=newline,font=small}
\caption{Comparison with the value-based and policy-based taxonomy.}
\label{tab.classification1}
\begin{center}
\begin{small}
\begin{sc}
\begin{tabular}{p{1cm}<{\centering}p{5cm}<{\centering}p{5cm}<{\centering}}
\hline
\headrow
 & Value-based & Policy-based \\
\hline
Indirect & DP \cite{bellman1966dynamic}, Q-learning \cite{watkins1992q} & Soft Q-learning \cite{haarnoja2017reinforcement}, HDP \cite{white1992handbook}, DHP \cite{white1992handbook}, \\
& DQN \cite{mnih2015human}, DDQN \cite{van2016deep} &  ADHDP \cite{white1992handbook}, ADDHP \cite{white1992handbook} \\
& Dueling DQN \cite{wang2015dueling}, PER \cite{schaul2015prioritized}  & GDHP \cite{prokhorov1997adaptive}, ADGDHP \cite{prokhorov1997adaptive}\\
& C51 \cite{bellemare2017distributional}, Rainbow \cite{hessel2018rainbow}  & CDADP \cite{duan2019deep}, DGPI \cite{duan2019generalized}\\ 
& NAF \cite{gu2016continuous}, VPN \cite{oh2017value}, GATS \cite{azizzadenesheli2018sample} & SVG \cite{heess2015learning},  SAC \cite{haarnoja2018soft},  DSAC \cite{duan2020addressing} \\
& Retrace($\lambda$) \cite{munos2016safe}, Dyna-DQN \cite{holland2018effect}&  PCL \cite{nachum2017bridging}, Trust-PCL \cite{nachum2017trust}\\

\hline
Direct      & & Natural PG \cite{kakade2002natural}, TRPO \cite{schulman2015trust}, PPO \cite{schulman2017proximal},\\
            & & DPG \cite{silver2014deterministic}, Off-PAC \cite{degris2012off}, ACER \cite{wang2016sample}, \\
            & & Reactor \cite{gruslys2017reactor}, IPG \cite{gu2017interpolated},  ACE \cite{imani2018off},\\
            & & Geoff-AC \cite{zhang2019generalized}, DDPG \cite{lillicrap2015continuous}, TD3 \cite{fujimoto2018addressing}, \\
            & & ACKTR \cite{wu2017scalable}, SIL \cite{oh2018self}, A3C (A2C) \cite{mnih2016asynchronous},\\
            & & APE-X \cite{horgan2018distributed}, IMPALA \cite{espeholt2018impala}, I2A \cite{racaniere2017imagination},  \\
            & & MVE \cite{feinberg2018model}, STEVE \cite{buckman2018sample}, GPS \cite{levine2013guided}, \\
            & & MBPO \cite{janner2019trust}, PILCO \cite{deisenroth2011pilco}, ME-TRPO \cite{kurutach2018model}\\
            & & ES \cite{salimans2017evolution}, ARS \cite{mania2018simple} D4PG \cite{d4pg2018}\\
            & & Recurrent world models \cite{ha2018recurrent}, Dreamer \cite{DREAMER}\\

              % & &Recurrent world models \cite{ha2018recurrent}\\

\hline
\end{tabular}
\end{sc}
\end{small}
\end{center}
\vskip -0.1in
\end{table*}

\begin{table*}[htbp]
\captionsetup{justification=centering,labelsep=newline,font=small}
\caption{Comparison with the model-based and model-free taxonomy.}
\label{tab.classification2}
\begin{center}
\begin{small}
\begin{sc}
\begin{tabular}{p{1cm}<{\centering}p{5cm}<{\centering}p{5cm}<{\centering}}
\hline
\headrow
 & Model-based & Model-free \\
\hline
Indirect    & DP \cite{bellman1966dynamic}, HDP \cite{white1992handbook}, & Retrace($\lambda$) \cite{munos2016safe}, Q-learning \cite{watkins1992q},\\
            & DHP \cite{white1992handbook}, ADHDP \cite{white1992handbook}, & DQN \cite{mnih2015human}, DDQN \cite{van2016deep}, \\ 
            & ADDHP \cite{white1992handbook}, GDHP \cite{prokhorov1997adaptive},         & PER \cite{schaul2015prioritized}, Dueling DQN \cite{wang2015dueling},\\
            & ADGDHP \cite{prokhorov1997adaptive},CDADP \cite{duan2019deep},        & C51 \cite{bellemare2017distributional}, Rainbow \cite{hessel2018rainbow},\\
            & DGPI \cite{duan2019generalized}, NAF \cite{gu2016continuous}, Dyna-DQN \cite{holland2018effect}      & PCL \cite{nachum2017bridging}, Trust-PCL \cite{nachum2017trust}\\
            & SVG \cite{heess2015learning}, VPN \cite{oh2017value}, GATS \cite{azizzadenesheli2018sample}          & SAC \cite{haarnoja2018soft}, DSAC \cite{duan2020addressing}\\

\hline
Direct         & MVE \cite{feinberg2018model} & Natural PG \cite{kakade2002natural}, TRPO \cite{schulman2015trust} \\
               & STEVE \cite{buckman2018sample}  &  ACKTR \cite{wu2017scalable}, PPO \cite{schulman2017proximal} \\
               & ME-TRPO \cite{kurutach2018model}& DPG \cite{silver2014deterministic}, Off-PAC \cite{degris2012off} \\
               & PILCO \cite{deisenroth2011pilco}& ACER \cite{wang2016sample}, Reactor \cite{gruslys2017reactor}  \\
               & MBPO \cite{janner2019trust}         & IPG \cite{gu2017interpolated}, TD3 \cite{fujimoto2018addressing}, ACE \cite{imani2018off} \\
               & GPS \cite{levine2013guided}     & Geoff-AC \cite{zhang2019generalized}, DDPG \cite{lillicrap2015continuous} \\
               & I2A \cite{racaniere2017imagination} & Soft Q-learning \cite{haarnoja2017reinforcement}, SIL \cite{oh2018self} \\
               & Recurrent world models \cite{ha2018recurrent}     & A3C (A2C) \cite{mnih2016asynchronous}, APE-X \cite{horgan2018distributed} \\
               & Dreamer \cite{DREAMER}          & IMPALA \cite{espeholt2018impala}, ES \cite{salimans2017evolution}, ARS \cite{mania2018simple}, D4PG \cite{d4pg2018}\\

\hline
\end{tabular}
\end{sc}
\end{small}
\end{center}
\vskip -0.1in
\end{table*}

\section{Convergence results}\label{sec.convergence}
In this section, we establish the convergence results of the direct RL and indirect RL with the help of the stochastic gradient theorem \cite{bertsekas2000gradient} and the error bound for approximate policy iteration \cite{bertsekas1996neuro}. With minor modifications to fit our settings, we can apply their theoretical results immediately.
\subsection{Direct RL}
Before getting into the convergence analysis of the direct method, we first define some symbols for convenience. Consider the following process,
\begin{equation}\label{original_notation_process}
    \theta_{k+1}=\theta_k+\alpha_k \overline{\nabla_{\theta} J(\theta)}|_{\theta=\theta_k},
\end{equation}
where $\overline{\nabla_{\theta}J(\theta)}=\frac{1}{T}\sum_{(s,a,r,s')\in\mathcal{D}}\nabla_{\theta} \log \pi_{\theta}(a | s)\left[r\!+\!\gamma V_w(s')\right]$ is the numerical approximation of the direct policy gradient \eqref{eq.direct_pg} and $\mathcal{D}=\left\{(s_i,a_i,r_i,s_{i+1})\right\}_{i=0:T-1}$ is a batch of data generated by the policy $\pi_{\theta}$. We denote that $\nabla J(\theta_k)=\nabla_{\theta} J(\theta)|_{\theta=\theta_k}$, $\overline{\nabla J(\theta_k)}=\overline{\nabla_{\theta} J(\theta)}|_{\theta=\theta_k}$, $w_k=\overline{\nabla J(\theta_k)}-\nabla J(\theta_k)$ and $\mathcal{F}_k=\left\{\theta_0,\alpha_0,\nabla J(\theta_0),w_0,...,\theta_k,\alpha_k,\nabla J(\theta_k)\right\}$, then the process \eqref{original_notation_process} becomes
\begin{equation}
\nonumber
    \theta_{k+1}=\theta_k+\alpha_k (\nabla J(\theta_k)+w_k).
\end{equation}
With the above formulation, the convergence of the direct method can be indicated by the following theorem.

\begin{theorem}(Stochastic gradient theorem \cite{bertsekas2000gradient})
Let $\theta_k$ be a sequence generated by the method
\begin{equation*}
    \theta_{k+1}=\theta_k+\alpha_k (\nabla J(\theta_k)+w_k),
\end{equation*}
where $\alpha_k$ is a deterministic positive stepsize, $\nabla J(\theta_k)$ is the steepest ascent direction, and $w_k$ is a random noise term. Let $\mathcal{F}_k$ be an increasing sequence of $\sigma$-fields. We assume the following conditions:

(a) $\nabla J(\theta_k)$ and $\theta_k$ are $\mathcal{F}_k$-measurable.

(b) (Lipschitz continuity of $\nabla J$) The function $J$ is continuously differentiable and there exists a constant $L$ such that
\begin{equation*}
    \left\| \nabla J(\theta_1) - \nabla J(\theta_2) \right\| \leq L \left\| \theta_1 - \theta_2 \right\|, \forall \theta_1, \theta_2 \in \mathbb{R}^m.
\end{equation*}

(c) We have, for all $t$ and with probability 1,
\begin{equation*}
\mathbb{E}\left\{w_{k} | \mathcal{F}_{k}\right\}=0,
\end{equation*}
and
\begin{equation*}
\mathbb{E}\left\{\left\|w_{k}\right\|^{2} | \mathcal{F}_{k}\right\} \leq A\left(1+\left\|\nabla J\left(\theta_{k}\right)\right\|^{2}\right),
\end{equation*}
where $A$ is a positive deterministic constant.

(d) The stepsize $\alpha_k$ is positive and satisfies
\begin{equation*}
\sum_{t=0}^{\infty} \alpha_{k}=\infty, \quad \sum_{t=0}^{\infty} \alpha_{k}^{2}<\infty.
\end{equation*}
Then either $J(\theta_k)\rightarrow \infty$ or $J(\theta_k)$ converges to a finite value and $\lim _{k \rightarrow \infty} \nabla J\left(\theta_{k}\right)=0$, $a.e.$. Furthermore, every limit point of $\theta_k$ is a stationary point of $J$.
\end{theorem}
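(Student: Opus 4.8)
The plan is to read the iteration as a stochastic ascent on $J$ and reduce the statement to an application of the supermartingale convergence theorem, following the template of Bertsekas and Tsitsiklis. First I would use the Lipschitz continuity of $\nabla J$ from assumption (b) through the standard ascent inequality $J(\theta_{k+1}) \ge J(\theta_k) + \langle \nabla J(\theta_k), \theta_{k+1}-\theta_k\rangle - \frac{L}{2}\lVert \theta_{k+1}-\theta_k\rVert^2$, then substitute the update $\theta_{k+1}-\theta_k = \alpha_k(\nabla J(\theta_k)+w_k)$ and take the conditional expectation given $\mathcal{F}_k$, which is legitimate because $\nabla J(\theta_k)$ and $\theta_k$ are $\mathcal{F}_k$-measurable by (a).

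Using $\mathbb{E}\{w_k\mid\mathcal{F}_k\}=0$ from (c) annihilates the cross term, and the second-moment bound $\mathbb{E}\{\lVert w_k\rVert^2\mid\mathcal{F}_k\}\le A(1+\lVert\nabla J(\theta_k)\rVert^2)$ controls the quadratic remainder, giving the key recursion
\[
\mathbb{E}\{J(\theta_{k+1})\mid\mathcal{F}_k\} \ge J(\theta_k) + \alpha_k\Big(1 - \tfrac{L(1+A)}{2}\alpha_k\Big)\lVert\nabla J(\theta_k)\rVert^2 - \tfrac{LA}{2}\alpha_k^2.
\]
Because $\sum\alpha_k^2<\infty$ forces $\alpha_k\to 0$, there is an index $k_0$ beyond which the parenthesised coefficient exceeds $\tfrac12$, so for $k\ge k_0$ the expected one-step gain of $J$ is at least $\tfrac{\alpha_k}{2}\lVert\nabla J(\theta_k)\rVert^2$ minus the summable term $\tfrac{LA}{2}\alpha_k^2$.

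Next I would convert this into a supermartingale statement. Setting $X_k = -J(\theta_k) + \tfrac{LA}{2}\sum_{j\ge k}\alpha_j^2$, the tail sum is finite since $\sum\alpha_j^2<\infty$, and the recursion yields $\mathbb{E}\{X_{k+1}\mid\mathcal{F}_k\}\le X_k - \tfrac{\alpha_k}{2}\lVert\nabla J(\theta_k)\rVert^2$ for $k\ge k_0$. On the event where $J(\theta_k)$ remains bounded above, $X_k$ is bounded below, so the supermartingale convergence theorem gives that $X_k$, hence $J(\theta_k)$, converges to a finite limit almost surely, while summing the telescoped decrements produces $\sum_k\alpha_k\lVert\nabla J(\theta_k)\rVert^2<\infty$ a.s. To make this rigorous on the whole sample space I would localize with the stopping times $\tau_M=\inf\{k: J(\theta_k)>M\}$, run the supermartingale argument up to $\tau_M$, and let $M\to\infty$; the paths on which $\tau_M<\infty$ for every $M$ are exactly those with $J(\theta_k)\to\infty$, which accounts for the dichotomy in the statement.

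The remaining and hardest part is upgrading summability to $\lim_k\nabla J(\theta_k)=0$. From $\sum_k\alpha_k\lVert\nabla J(\theta_k)\rVert^2<\infty$ together with $\sum_k\alpha_k=\infty$ in (d) one immediately gets $\liminf_k\lVert\nabla J(\theta_k)\rVert=0$, but excluding persistent oscillations is the delicate technical core. I would argue by contradiction: if $\lVert\nabla J(\theta_k)\rVert\ge\epsilon$ infinitely often while also returning near $0$ infinitely often, then between a low-gradient index and the next high-gradient index the iterate must travel a distance bounded below, whereas the Lipschitz estimate $\big|\,\lVert\nabla J(\theta_{k+1})\rVert-\lVert\nabla J(\theta_k)\rVert\big|\le L\alpha_k\lVert\nabla J(\theta_k)+w_k\rVert$ limits how fast the gradient norm can change per unit of travel; quantifying this interval-crossing against the summability of $\alpha_k\lVert\nabla J(\theta_k)\rVert^2$, and controlling the noise contribution through its conditional second moment, forces a contradiction. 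Once $\nabla J(\theta_k)\to 0$ is secured, continuity of $\nabla J$ transfers the property to any limit point $\bar\theta$ of $\{\theta_k\}$, giving $\nabla J(\bar\theta)=0$, i.e. $\bar\theta$ is stationary.
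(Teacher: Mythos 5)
The paper does not prove this theorem at all: it is quoted verbatim from Bertsekas and Tsitsiklis and invoked as an imported result, so there is no in-paper argument to compare against. Judged on its own terms, your outline is a faithful reconstruction of the proof in the cited source: the ascent inequality from the Lipschitz bound, conditional expectation killing the cross term via $\mathbb{E}\{w_k\mid\mathcal{F}_k\}=0$, the recursion with coefficient $1-\tfrac{L(1+A)}{2}\alpha_k$ (which I checked and is exactly right), the supermartingale construction with the tail sum $\tfrac{LA}{2}\sum_{j\ge k}\alpha_j^2$, the resulting summability $\sum_k\alpha_k\lVert\nabla J(\theta_k)\rVert^2<\infty$, and the band-crossing contradiction to upgrade $\liminf$ to $\lim$. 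Two places where your sketch is thinner than a complete proof: first, the localization via $\tau_M$ needs care because a single step can carry $J(\theta_{\tau_M})$ far above $M$ (the increment is not bounded), so the stopped process is not immediately bounded below by $-M$ plus a constant; Bertsekas--Tsitsiklis handle this with an additional argument that also rules out the a priori possibility $\liminf J<\limsup J=\infty$, which is needed to get the clean dichotomy in the statement. Second, in the crossing argument the distance traveled over a crossing interval involves the accumulated noise $\sum\alpha_k w_k$ over that interval, and controlling it requires a separate martingale convergence theorem applied to the partial sums of $\alpha_k w_k$ (using $\sum\alpha_k^2<\infty$ and the conditional second-moment bound), not merely the pointwise conditional variance bound; you gesture at this but it is the genuinely hard step. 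With those two points filled in, your route is the standard and correct one.
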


\subsection{Indirect RL}
The indirect method generates a sequence of policies $\pi_{\theta_k}$ and a corresponding sequence of approximate value functions $\bm{V_{w_k}}$. We use the following assumptions to bound the error of the PEV and PIM steps:
\begin{equation}\label{eq.indirect_convergence_pev_error_bound}
\lVert \bm{V_{w_k}}-\bm{v^{\pi_{\theta_k}}}\rVert_{\infty}\leq \epsilon, \quad k=0,1, \ldots
\end{equation}
\begin{equation}\label{eq.indirect_convergence_pim_error_bound}
    \lVert \mathcal{T}_{\pi_{\theta_{k+1}}}\bm{V_{w_k}} - \mathcal{T}_* \bm{V_{w_k}}\rVert_{\infty}\leq \delta, \quad k=0,1, \ldots
\end{equation}
where $\lVert \bm{x} \rVert_{\infty}=\max_{i=1:n} |x_i|$ is the infinite norm, $\epsilon$ and $\delta$ are some positive scalars. The scalar $\epsilon$ is an assumed worst-case bound on the error incurred during the PEV step. The scalar $\delta$ is a bound on the error incurred in the course of the computations required for the PIM step. Then the following theorem reveals the convergence of the indirect method.
\begin{theorem}(Error bound for approximate policy iteration \cite{bertsekas1996neuro})
Given \eqref{eq.indirect_convergence_pev_error_bound} and \eqref{eq.indirect_convergence_pim_error_bound}, the sequence of policies $\pi_{\theta_k}$ generated by the indirect method satisfy

\begin{equation*}
    \limsup _{k \rightarrow \infty} \lVert \bm{v^{\pi_{\theta_{k}}}}-\bm{v^{*}}\rVert_{\infty} \leq \frac{\delta+2 \gamma \epsilon}{(1-\gamma)^{2}}.
\end{equation*}
\end{theorem}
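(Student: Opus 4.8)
The plan is to adapt the classical approximate policy iteration error analysis to the operators $\mathcal{T}_{\pi_{\theta}}$ and $\mathcal{T}_{*}$ of \eqref{eq.operations}. Abbreviate $\pi_{\theta_k}$ as $\mu_k$, write $\bm{1}=[1,\dots,1]^{\top}$ for the all-ones vector, and set $\beta=\delta+2\gamma\epsilon$. Two structural properties of the operators, both immediate from $\mathcal{T}_{\mu}\bm{v}=\bm{r_{\mu}}+\gamma\bm{p_{\mu}}\bm{v}$ and the fact that $\bm{p_{\mu}}$ is row-stochastic, drive the whole argument: monotonicity (if $\bm{x}\le\bm{y}$ componentwise then $\mathcal{T}_{\mu}\bm{x}\le\mathcal{T}_{\mu}\bm{y}$, and likewise for $\mathcal{T}_{*}$), and the constant-shift identity $\mathcal{T}_{\mu}(\bm{x}+c\bm{1})=\mathcal{T}_{\mu}\bm{x}+\gamma c\bm{1}$. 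I will also use the two standing facts $\bm{v^{\mu_k}}\le\bm{v^*}$ and $\mathcal{T}_{*}\bm{v^{\mu_k}}\ge\mathcal{T}_{\mu_k}\bm{v^{\mu_k}}=\bm{v^{\mu_k}}$, the latter because $\mathcal{T}_{*}$ maximizes over policies while $\bm{v^{\mu_k}}$ is the fixed point of $\mathcal{T}_{\mu_k}$.

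The first key step turns the two scalar error bounds \eqref{eq.indirect_convergence_pev_error_bound} and \eqref{eq.indirect_convergence_pim_error_bound} into a single componentwise inequality. Rewriting the PEV bound as $\bm{v^{\mu_k}}-\epsilon\bm{1}\le\bm{V_{w_k}}\le\bm{v^{\mu_k}}+\epsilon\bm{1}$ and the PIM bound as $\mathcal{T}_{\mu_{k+1}}\bm{V_{w_k}}\ge\mathcal{T}_{*}\bm{V_{w_k}}-\delta\bm{1}$, I would chain monotonicity and the shift identity to obtain
\[
\mathcal{T}_{\mu_{k+1}}\bm{v^{\mu_k}}\ge\mathcal{T}_{*}\bm{v^{\mu_k}}-\beta\bm{1}.
\]
Each inequality in the chain costs one $\gamma\epsilon\bm{1}$ (for passing between $\bm{v^{\mu_k}}$ and $\bm{V_{w_k}}$ underneath an operator) or the $\delta\bm{1}$ of the PIM bound, and the three contributions add up to $\beta$.

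Next I would extract two consequences. Since $\mathcal{T}_{*}\bm{v^{\mu_k}}\ge\bm{v^{\mu_k}}$, the displayed inequality gives $\mathcal{T}_{\mu_{k+1}}\bm{v^{\mu_k}}\ge\bm{v^{\mu_k}}-\beta\bm{1}$; iterating $\mathcal{T}_{\mu_{k+1}}$ (monotone plus shift) and using that the iterates converge to the fixed point $\bm{v^{\mu_{k+1}}}$ yields the no-backsliding bound $\bm{v^{\mu_{k+1}}}\ge\bm{v^{\mu_k}}-\frac{\beta}{1-\gamma}\bm{1}$. Separately, subtracting the displayed inequality from $\bm{v^*}=\mathcal{T}_{*}\bm{v^*}$ and using $\bm{v^*}\le\bm{v^{\mu_k}}+\lVert\bm{v^*}-\bm{v^{\mu_k}}\rVert_{\infty}\bm{1}$ with monotonicity gives $\bm{v^*}-\mathcal{T}_{\mu_{k+1}}\bm{v^{\mu_k}}\le(\gamma\lVert\bm{v^*}-\bm{v^{\mu_k}}\rVert_{\infty}+\beta)\bm{1}$; while the no-backsliding bound together with $\bm{v^{\mu_{k+1}}}=\mathcal{T}_{\mu_{k+1}}\bm{v^{\mu_{k+1}}}$ gives $\mathcal{T}_{\mu_{k+1}}\bm{v^{\mu_k}}-\bm{v^{\mu_{k+1}}}\le\frac{\gamma\beta}{1-\gamma}\bm{1}$. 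Adding the last two, the $\beta$ terms combine as $\beta+\frac{\gamma\beta}{1-\gamma}=\frac{\beta}{1-\gamma}$, so
\[
\bm{v^*}-\bm{v^{\mu_{k+1}}}\le\Big(\gamma\lVert\bm{v^*}-\bm{v^{\mu_k}}\rVert_{\infty}+\frac{\beta}{1-\gamma}\Big)\bm{1}.
\]

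Finally, because the left-hand side is nonnegative, taking the max norm converts this into the scalar recursion $a_{k+1}\le\gamma a_k+\frac{\beta}{1-\gamma}$ with $a_k=\lVert\bm{v^*}-\bm{v^{\mu_k}}\rVert_{\infty}$. Passing to $\limsup$ and solving $\limsup_k a_k\le\gamma\limsup_k a_k+\frac{\beta}{1-\gamma}$ gives $\limsup_k a_k\le\frac{\beta}{(1-\gamma)^2}=\frac{\delta+2\gamma\epsilon}{(1-\gamma)^2}$, the claim. I expect the main obstacle to be the bookkeeping in the first step: one must apply monotonicity and the constant-shift identity in exactly the right order so that the uniform ($\infty$-norm) errors $\epsilon$ and $\delta$ become clean componentwise vector inequalities, and it is precisely there that the characteristic combination $\delta+2\gamma\epsilon$ — rather than $\delta+2\epsilon$ — emerges, since each time an operator is moved across the PEV gap it contributes one factor of $\gamma$.
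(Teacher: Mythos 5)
The paper offers no proof of this theorem; it simply imports the result from Bertsekas and Tsitsiklis\cite{bertsekas1996neuro}, so there is no in-paper argument to compare against. Your reconstruction is the standard proof of that cited result and is correct: the componentwise inequality $\mathcal{T}_{\pi_{\theta_{k+1}}}\bm{v^{\pi_{\theta_k}}}\ge\mathcal{T}_{*}\bm{v^{\pi_{\theta_k}}}-(\delta+2\gamma\epsilon)\bm{1}$, the no-backsliding bound, the contraction-style recursion $a_{k+1}\le\gamma a_k+\frac{\delta+2\gamma\epsilon}{1-\gamma}$, and the passage to $\limsup$ (legitimate since $a_k$ is bounded) all check out, and the factor $\gamma$ on $\epsilon$ is correctly accounted for each time the PEV gap is moved under an operator.
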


\section{Experimental verification}\label{sec.experiment}

In this section, we carry out two experiments to answer the following two questions.

1. Is the value-based method (one without an explicit policy) a special case of the indirect method (one with an explicit policy)?

2. What is the difference in the training process of the direct PG, indirect PG and the unified PG? In other words, what is the influence of the two PG components (state distribution and value function) on the training process?

We discuss the results of running different methods on a toy 4 by 6 grid world, as shown in Figure \ref{fig.task}. The agent always begins in the square colored blue and the episode continues until it reaches the terminal square colored red, upon which it receives a reward of 1. All other times it receives no reward. The discount factor is chosen as $\gamma=0.9$. The squares colored black and the border of the arena are regarded as the walls. The white squares are the free-to-go space, resulting in 16 discrete states in this task numbered as in Figure \ref{fig.task.state_number}. In each state the agent can perform one of the four actions including ``up", ``down", ``left" and ``right" and transfer to the adjacent state or stay still when it encounters the wall.

\begin{figure*}[h]
\centering
\captionsetup[subfigure]{justification=centering}
\subfloat[]{\includegraphics[width=0.3\linewidth]{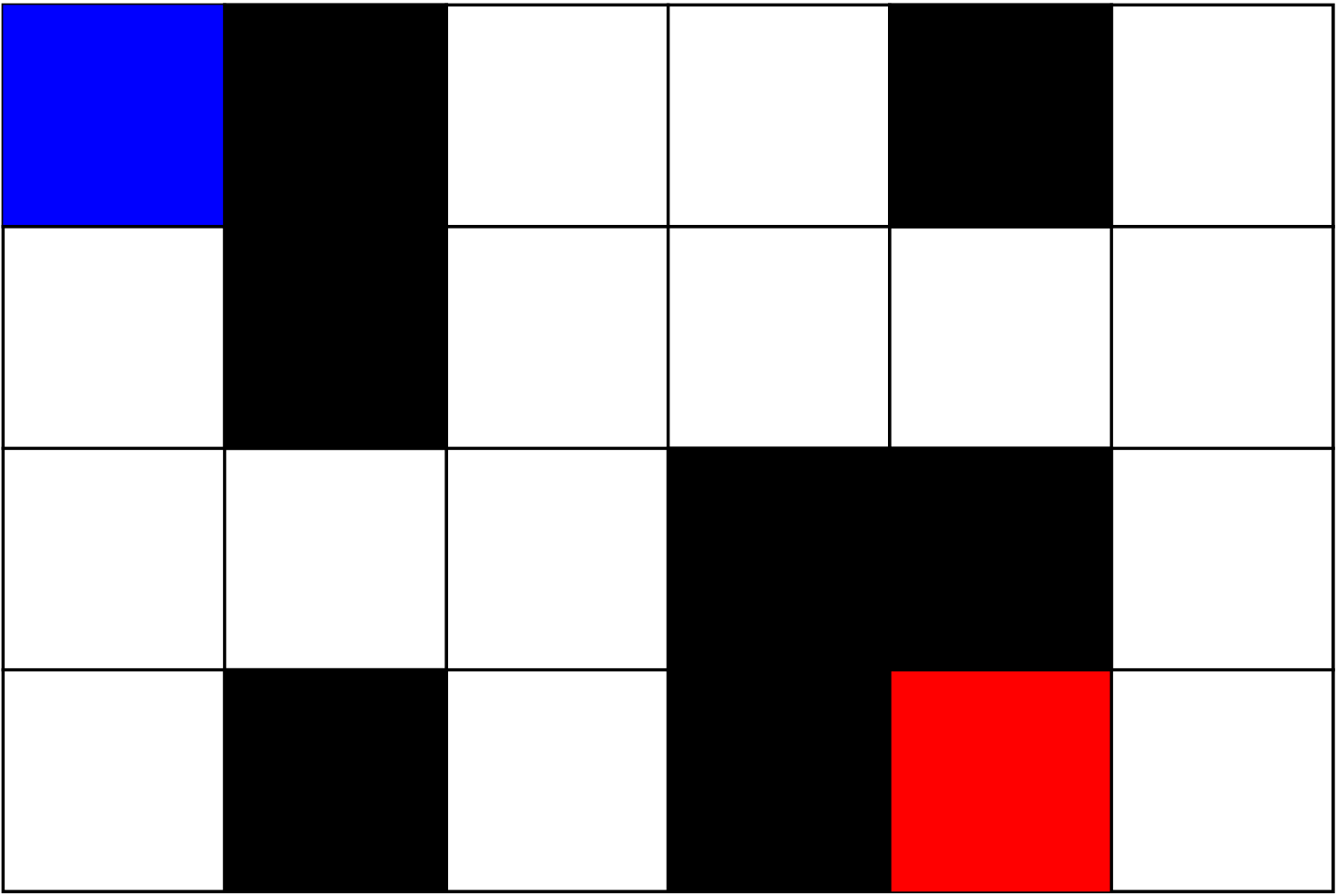}}\qquad
\subfloat[]{\label{fig.task.state_number}\includegraphics[width=0.3\linewidth]{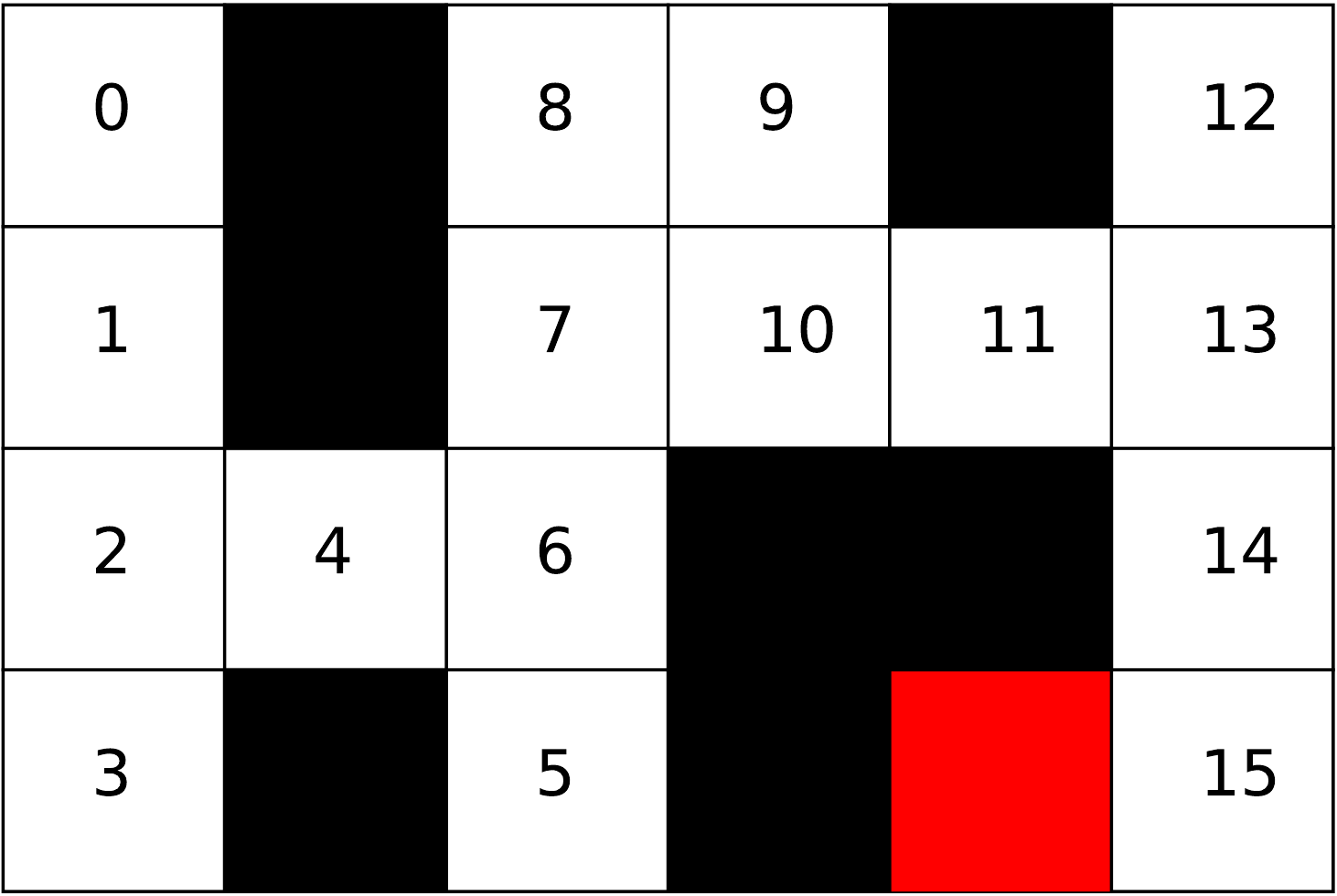}}
\caption{The grid world task used in the experiment and the state number.}
\label{fig.task}
\end{figure*}

\subsection{Experiment 1}
\begin{figure*}[h]
\begin{center}
\centerline{\includegraphics[width=0.88\linewidth]{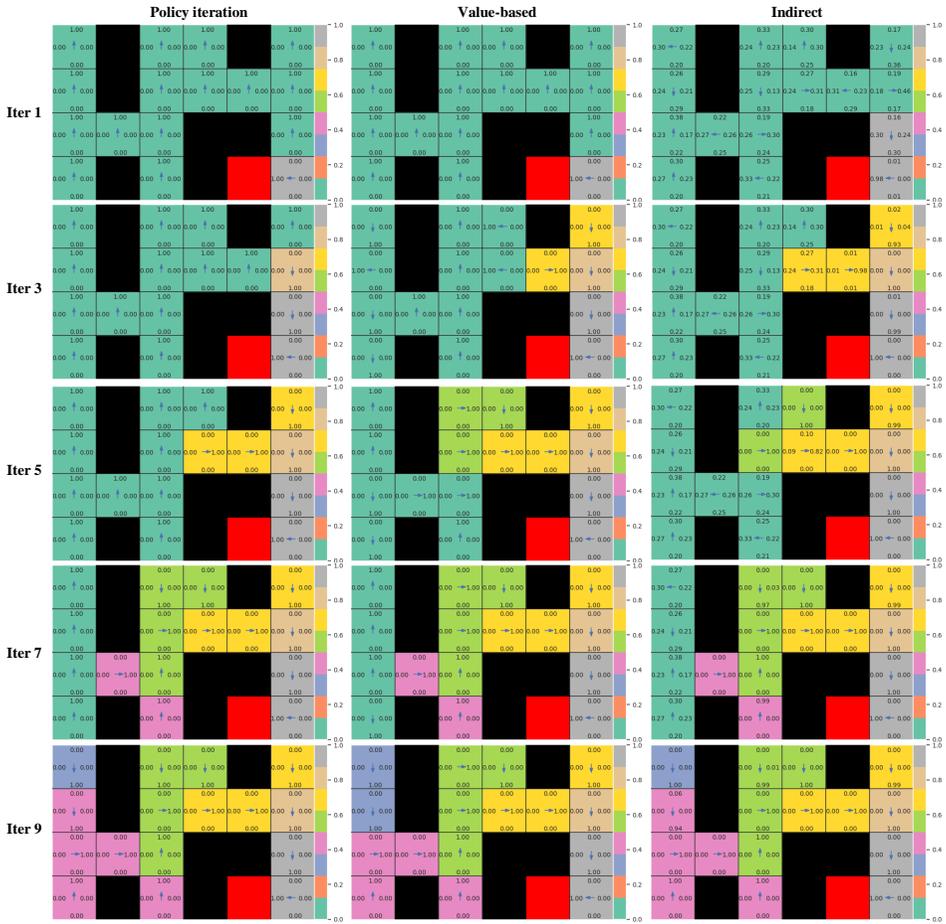}}
\caption{Visualization of the values and policys during the training process across different methods.}
\label{fig.exp1}
\end{center}
\end{figure*}
In this experiment, we compare a value-based method with an approximate value function and an indirect method with both approximate policy and value functions. Besides, a policy iteration algorithm with tabular value and policy is also performed as the benchmark. For each method, we initialize the value function with the same value (all zeros) and initialize a dummy policy which chooses to move ``up" in every state. Then, we perform the PIM and PEV procedures alternatively, each of which does not terminate until the policy or the value hits convergence. 
For the methods using value function approximation, i.e. value-based and indirect methods, the function is a multi-layer perceptron (MLP) with three hidden layers, each with 256 units and GELU activation \cite{hendrycks2016gaussian}. The input is an one-hot encoding of the 16 states and the output is the corresponding value.
It is also worth noting that the policy iteration and the value-based methods aim to find an implicit tabular greedy policy from scratch in each PIM while the indirect method keeps a parametrerized stochastic policy from the very start and seeks to optimize it by the policy gradient in each iteration. The policy is parameterized by a 16$\times$4 matrix normalized by the Softmax layer to denote the discrete distribution of the four actions in each of the 16 states.
We visualize the values and policies during the training process, shown in Figure \ref{fig.exp1}. In each iteration, each state, i.e. a square, is covered by a certain color to denote its value. And the number on each side of it represents the probability moving towards that direction under the current policy. The arrow at the square center points to the action with the highest probability.

We can see from the result that both the value-based method and the indirect method are basically in accordance with the benchmark (policy iteration method) in terms of the state value, except some minor differences caused by the function approximation error. In addition, although the indirect method owns a stochastic policy and optimizes it by policy gradient, it yields an action distribution close to the greedy policy produced by the value-based method. Furthermore, to eliminate the randomness, we further perform 5 runs of value-based and indirect method respectively and calculate the value mean squared error (MSE) in each iteration, as shown in Table \ref{tab.exp1}. Both the errors are close to zero, indicating their equivalence in statistics. The experiment results support the point that the value-based method is in fact a special indirect method with an implicit greedy policy.

\begin{table*}[t]
\captionsetup{justification=centering,labelsep=newline,font=small}
\caption{Mean squared error of the state value over 5 runs, $\pm$ corresponds to 95\% confidence interval over runs.}
\label{tab.exp1}
\begin{center}
\begin{small}
\begin{sc}
\begin{tabular}{p{1.5cm}<{\centering}p{1.1cm}<{\centering}p{1.1cm}<{\centering}p{1.1cm}<{\centering}p{1.1cm}<{\centering}p{1.1cm}<{\centering}p{1.1cm}<{\centering}p{1.1cm}<{\centering}p{1.1cm}<{\centering}}
\hline
\headrow
Iteration & 1 & 3 & 5 & 7 & 9 & 11& 13 & 15\\
\hline
Value-based method & 5e-7 ($\pm$2e-6) & 8e-4 ($\pm$2e-3) & 2e-3 ($\pm$8e-3) & 8e-4 ($\pm$1e-5) & 7e-4 ($\pm$1e-3) & 1e-7 ($\pm$1e-7) & 1e-7 ($\pm$1e-7) & 1e-7 ($\pm$1e-7)\\
\hline
Indirect method & 2e-8 ($\pm$4e-8) & 5e-3 ($\pm$1e-2) & 1e-2 ($\pm$3e-2) & 3e-3 ($\pm$4e-3) & 7e-4 ($\pm$1e-3) & 9e-8 ($\pm$1e-7) & 8e-8 ($\pm$1e-7) & 8e-8 ($\pm$1e-7) \\
\hline
\end{tabular}
\end{sc}
\end{small}
\end{center}
\end{table*}

\begin{table*}[htbp]
\captionsetup{justification=centering,labelsep=newline,font=small}
\caption{Basic description of the methods for comparison.}
\label{tab.exp2.method}
\begin{center}
\begin{small}
\begin{sc}
\begin{tabular}{p{3cm}<{\centering}p{4cm}<{\centering}p{4cm}<{\centering}}
\hline
\headrow
Method & State distribution in PG & Value function in PG\\
\hline
Direct PG & DVF & True \\
\hline
Indirect PG & Initial & Approximation \\
\hline
Unified PG & Stationary & Approximation \\
\hline
Baseline PG 1 & Initial & True \\
\hline
Baseline PG 2 & Stationary & True\\
\hline
\end{tabular}
\end{sc}
\end{small}
\end{center}
\end{table*}

\subsection{Experiment 2}
In this experiment, we compare the direct PG, the indirect PG, the unified PG and two other baseline PGs to study the influence of different PG components, i.e., state distribution and value function, on the training process. The direct method uses the PG with the discounted visiting frequency (DVF) and the ``true" value function, while the indirect method uses the PG with the initial state distribution and the approximate value function. The unified PG is with the stationary state distribution and the approximate value function. The two baseline PGs all use true value function in the PG but employ initial state distribution and stationary state distribution respectively. All the methods are listed in Table \ref{tab.exp2.method}.
The DVF are sampled by collecting different number of states in different steps starting from the initial state distribution with exponential decay constant $\gamma$ as suggested by the definition of the DVF \eqref{eq.DVF}. Similarly, guided by the Proposition \ref{myproposition2}, the stationary state distribution is therefore obtained by using the DVF with $\gamma=1$. The ``true" value function is estimated by Monte Carlo method, while the approximate value function is constructed and learned the same way as in the experiment 1 except that it is updated a fixed number of time $m$ between two policy updates and does not necessarily converge. In addition, we use the same policy as that in the experiment 1. The value and policy networks are initialized by the same seed across different methods.
We design two kinds of initial state distribution $d^0$ and two different value iteration numbers $m$, which formulates 4 cases in total shown in Table \ref{tab.exp2.cases}. In each case, we implement by all the methods and evaluate their true value mean in terms of the initial state distribution and the stationary state distribution. The training results are presented in the Figure \ref{fig.exp2.value_mean}.
\begin{table*}[htbp]
\captionsetup{justification=centering,labelsep=newline,font=small}
\caption{Different case settings for experiment 2.}
\label{tab.exp2.cases}
\begin{center}
\begin{small}
\begin{sc}
\begin{tabular}{p{3cm}<{\centering}p{4cm}<{\centering}p{4cm}<{\centering}}
\hline
\headrow
Case & Initial state distribution $d^0$ & Value iteration numbers $m$ \\
\hline
1 & $s^0 \sim \mathcal{U}\{0, 15\}$ & 5 \\
\hline
2 & $s^0 \sim \mathcal{U}\{0, 15\}$ & 30 \\
\hline
3 & $s^0 \sim \mathcal{U}\{0, 14\}$ & 5 \\
\hline
4 & $s^0 \sim \mathcal{U}\{0, 14\}$ & 30 \\
\hline
\end{tabular}
\end{sc}
\end{small}
\end{center}
\end{table*}

\begin{figure*}[htbp]
\centering
\captionsetup[subfigure]{justification=centering}
\subfloat[Case 1: $s^0\sim \mathcal{U}\{0, 15\}, m=5$]{\label{fig.exp2.value_mean_case1}\includegraphics[width=0.45\linewidth]{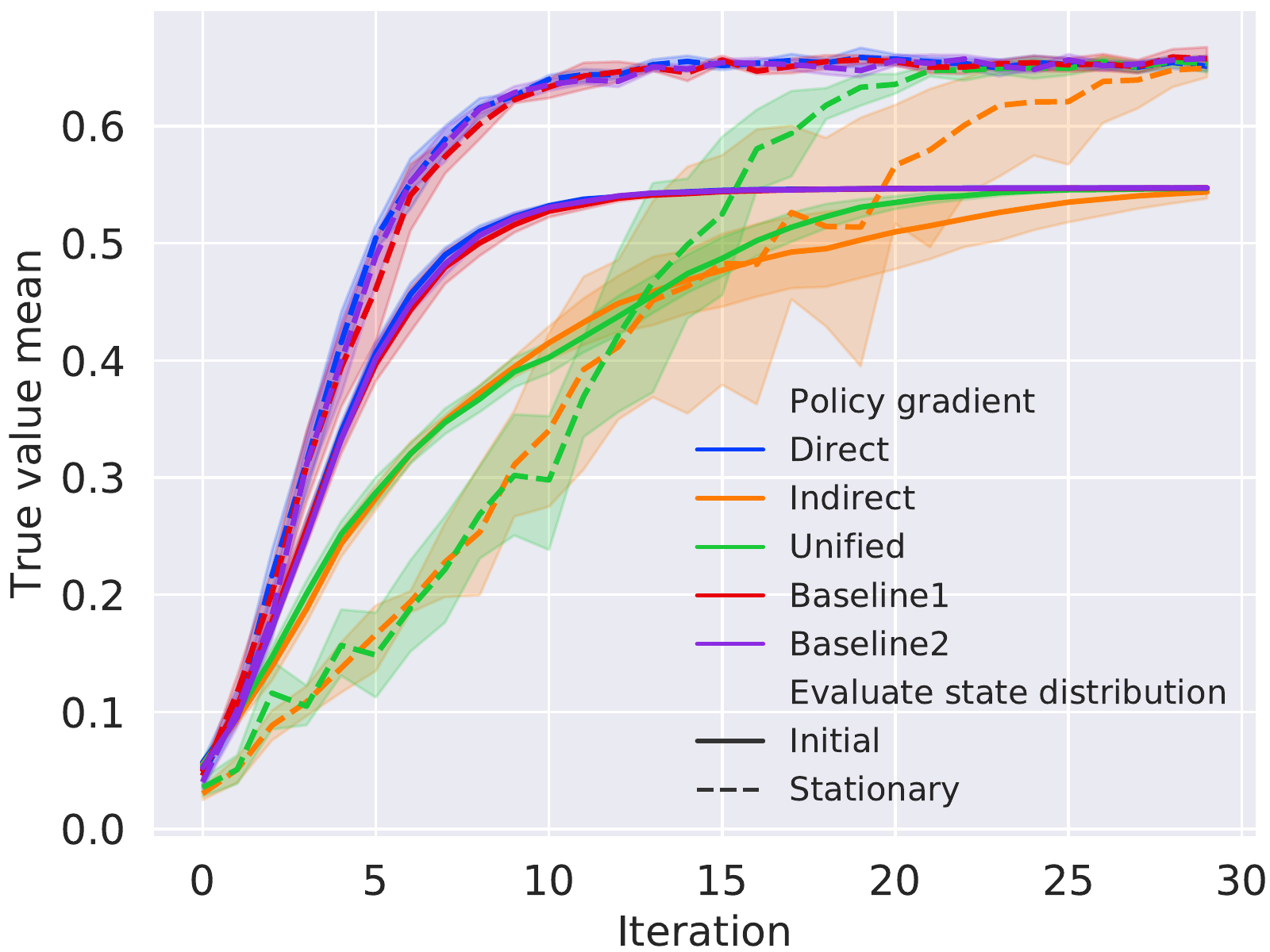}}\qquad
\subfloat[Case 2: $s^0\sim \mathcal{U}\{0, 15\}, m=30$]{\label{fig.exp2.value_mean_case2}\includegraphics[width=0.45\linewidth]{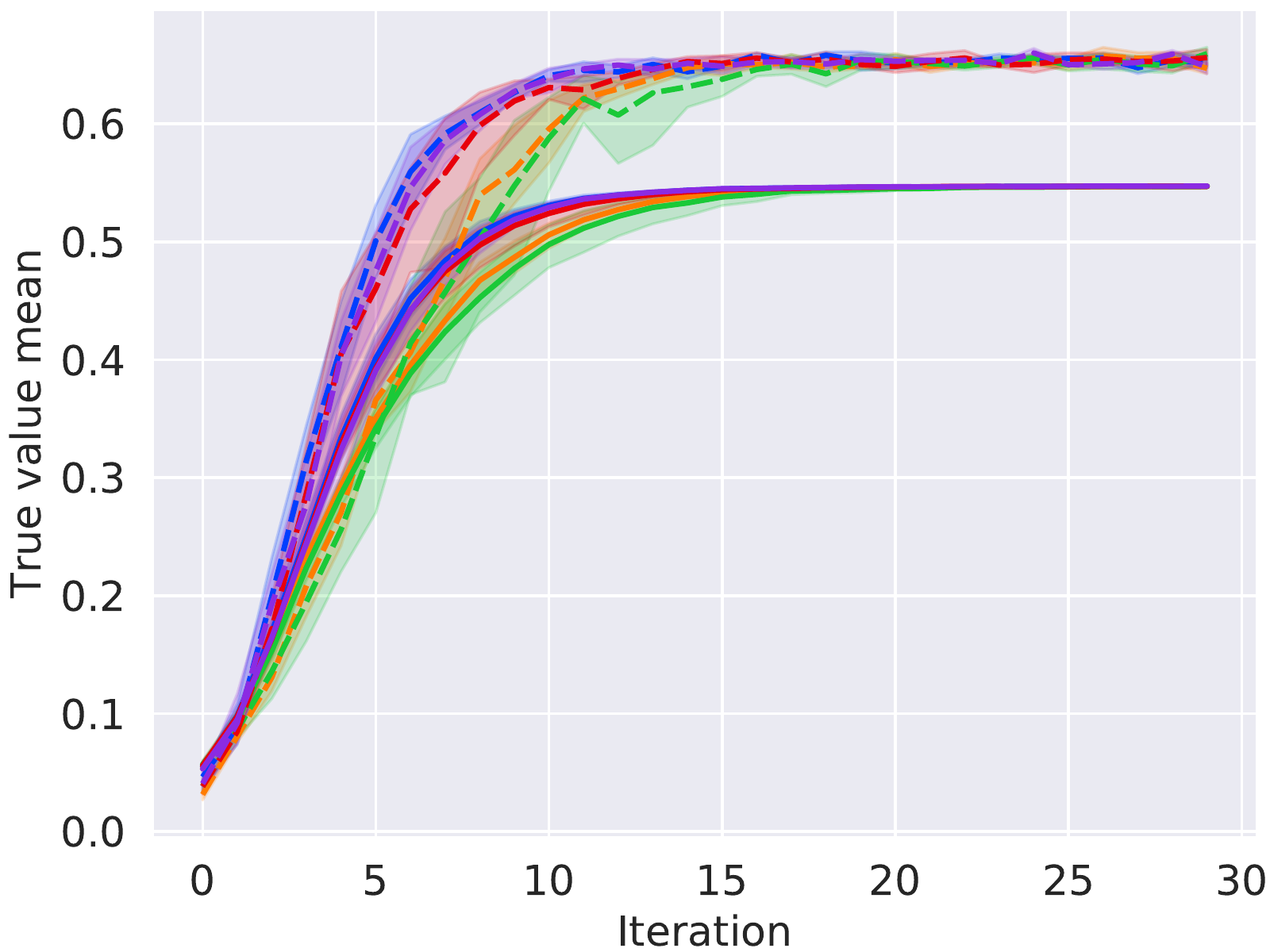}}\\
\subfloat[Case 3: $s^0\sim \mathcal{U}\{0, 14\}, m=5$]{\label{fig.exp2.value_mean_case3}\includegraphics[width=0.45\linewidth]{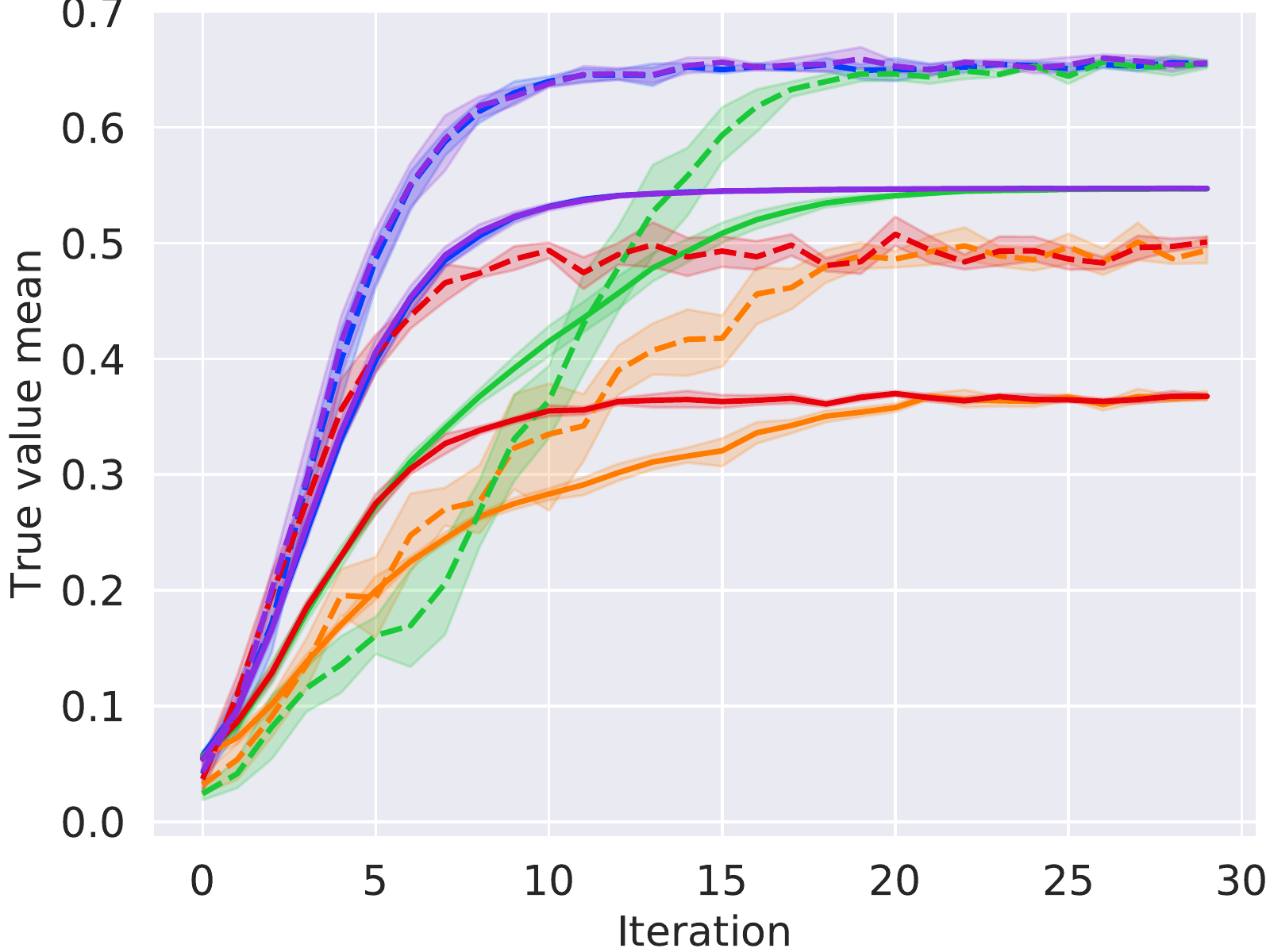}}\qquad
\subfloat[Case 4: $s^0\sim \mathcal{U}\{0, 14\}, m=30$]{\label{fig.exp2.value_mean_case4}\includegraphics[width=0.45\linewidth]{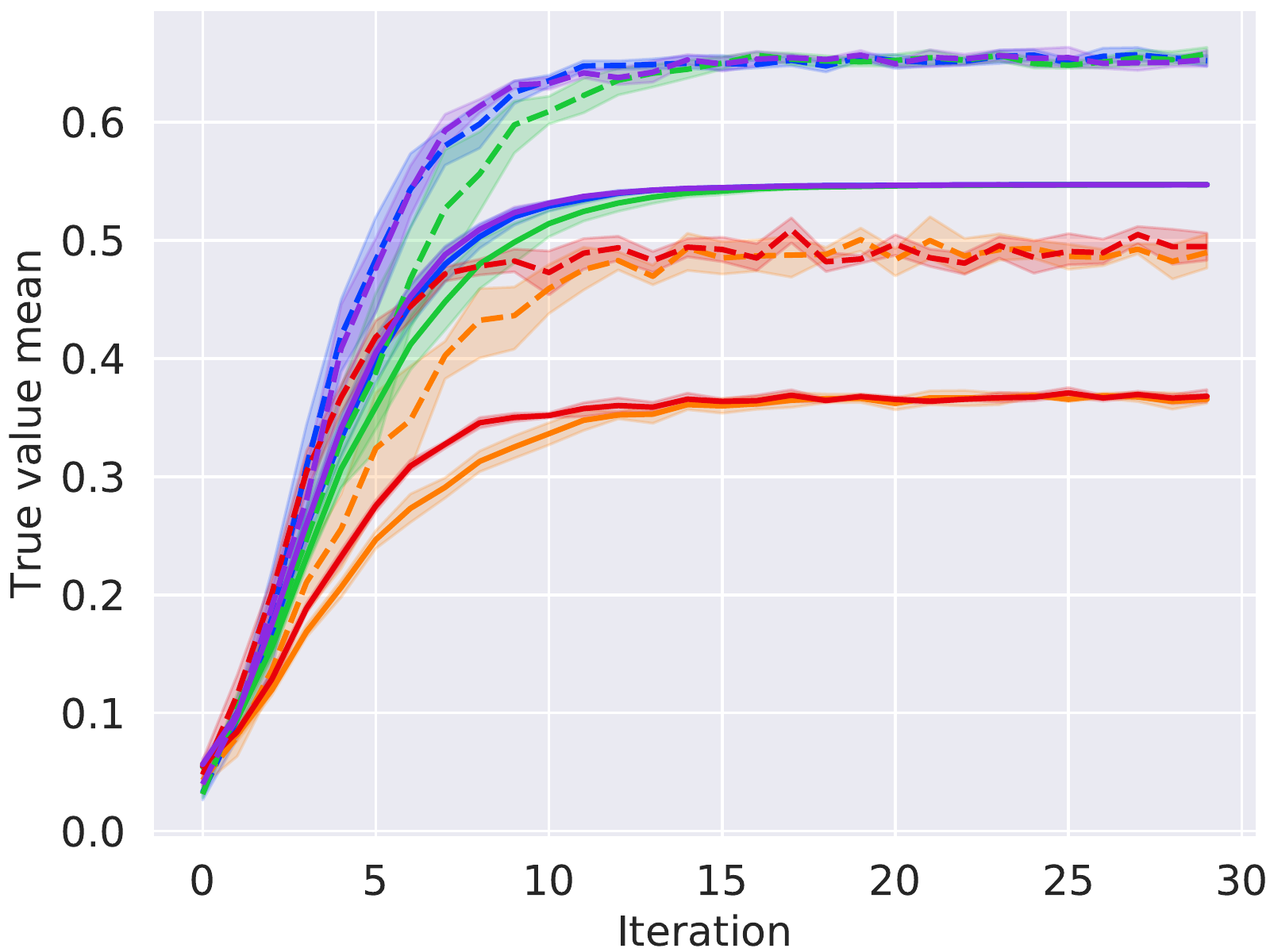}}
\caption{Training process of all methods under different cases. The solid lines correspond to the mean and the shaded regions correspond to 95\% confidence interval over 5 runs.}
\label{fig.exp2.value_mean}
\end{figure*}

We focus on the case 3 (Figure \ref{fig.exp2.value_mean_case3}), where the initial state distribution is discrete uniform distribution $\mathcal{U}\{0, 14\}$, and the value update number between two policy updates is five. An obvious observation is that the value mean evaluated over the stationary state distribution outperforms the one evaluated over the initial state distribution across all methods. That means the learned policy always visits more frequently the states with higher values. To see the effect of state distribution in PG, we compare the direct PG, and two baseline PGs, where they all use the true value function but adopt initial and stationary distributions as the state distribution in their own PG. The figure shows that the direct PG and the baseline PG 2 have similar trends and final performance, because in this problem both the DVF and the stationary state distribution have access to all the states even if the initial state distribution excludes the state ``15". On the other hand, the baseline PG 1 has degraded performance because the policy in state ``15" is not optimized, which further reduces the value of all the associated states. Similar phenomenon can be seen from the comparison of the indirect and the unified PG. However, when the state ``15" is included in the initial distribution, such as in case 1 and case 2 (Figure \ref{fig.exp2.value_mean_case1} and Figure \ref{fig.exp2.value_mean_case2}), the gap between the direct PG and the baseline PG 1 will be eliminated. Moreover, to see the influence of the value function in PG, we compare the indirect PG and the baseline PG 1, where they both use initial state distribution but employ approximation and true value functions separately. Still in the case 3, we can see that the indirect PG converges slower than the baseline1 PG. That is caused by the bias between the approximation value and the true value. In this case, the approximation value function only updates five times between two policy updates, which would not converge until the late training phase and results in inaccurate PG estimations. When we increase the $m$, as shown in case 4 and case 2 (Figure \ref{fig.exp2.value_mean_case4} and Figure \ref{fig.exp2.value_mean_case2}), the margin would be largely reduced. Similar results can be observed by comparing the unified PG and the baseline PG 2.

Except the value curves, we also draw the policy entropy during the training process in the case 1 and case 4, as shown in Figure \ref{fig.exp2.policy_entropy}. The policy entropy keeps decreasing along the training process, that means the policy is gradually optimized and becomes more and more deterministic until it converges to a greedy policy with zero entropy, as the results in the experiment 1. Note that the policy entropy of the indirect and unified PG drops slower, which is again caused by the inaccuracy of the value approximation. Additionally, notice that the indirect and baseline PG 1 converge to a non-zero entropy in the case 4 (Figure \ref{fig.exp2.policy_entropy_case4}). That is originated from the excluded state ``15" because the policy in it is never optimized.
\begin{figure*}[htbp]
\centering
\captionsetup[subfigure]{justification=centering}
\subfloat[Case 1: $s^0\sim \mathcal{U}\{0, 15\}, m=5$]{\label{fig.exp2.policy_entropy_case1}\includegraphics[width=0.45\linewidth]{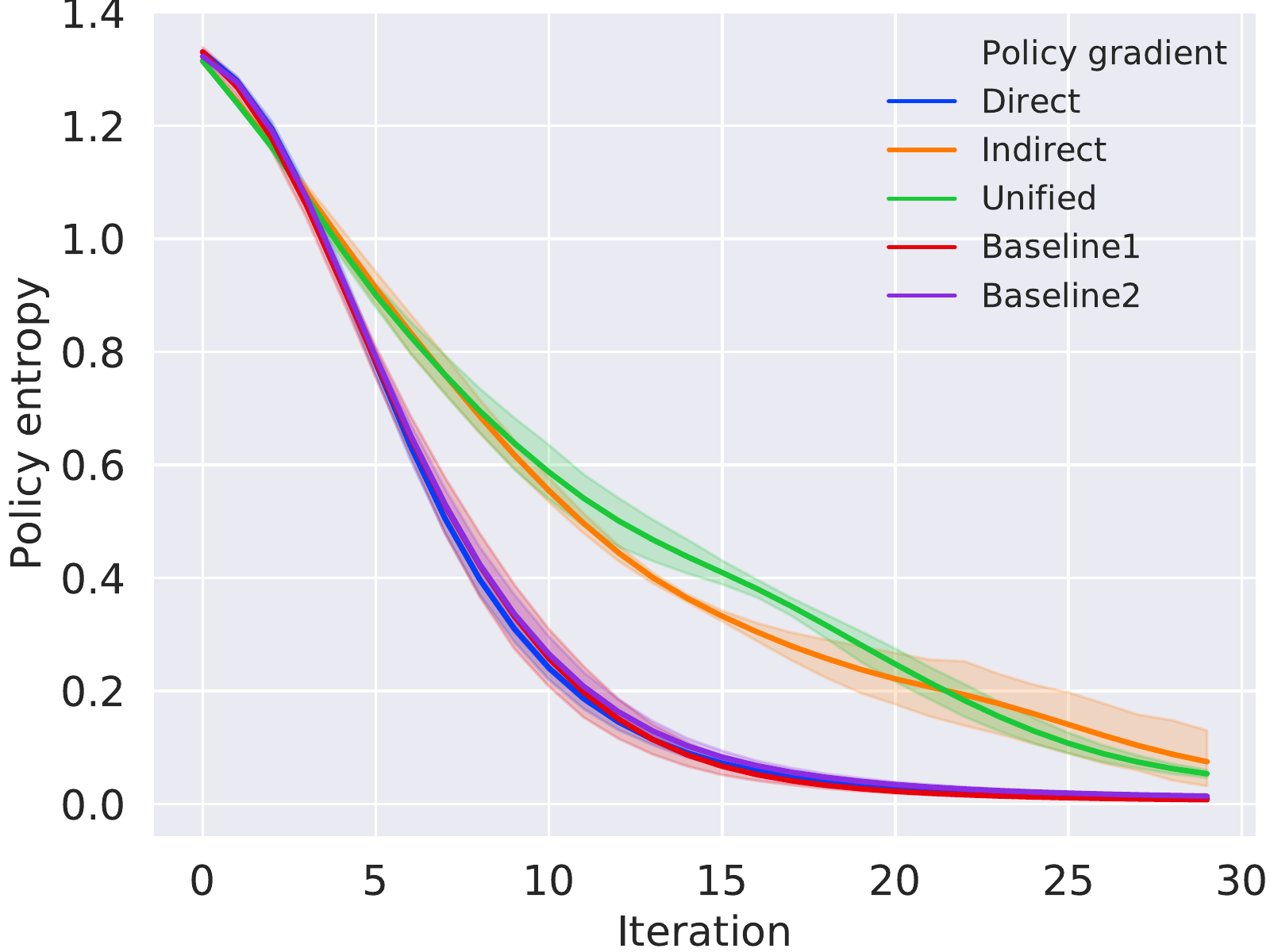}}\qquad
\subfloat[Case 4: $s^0\sim \mathcal{U}\{0, 14\}, m=30$]{\label{fig.exp2.policy_entropy_case4}\includegraphics[width=0.45\linewidth]{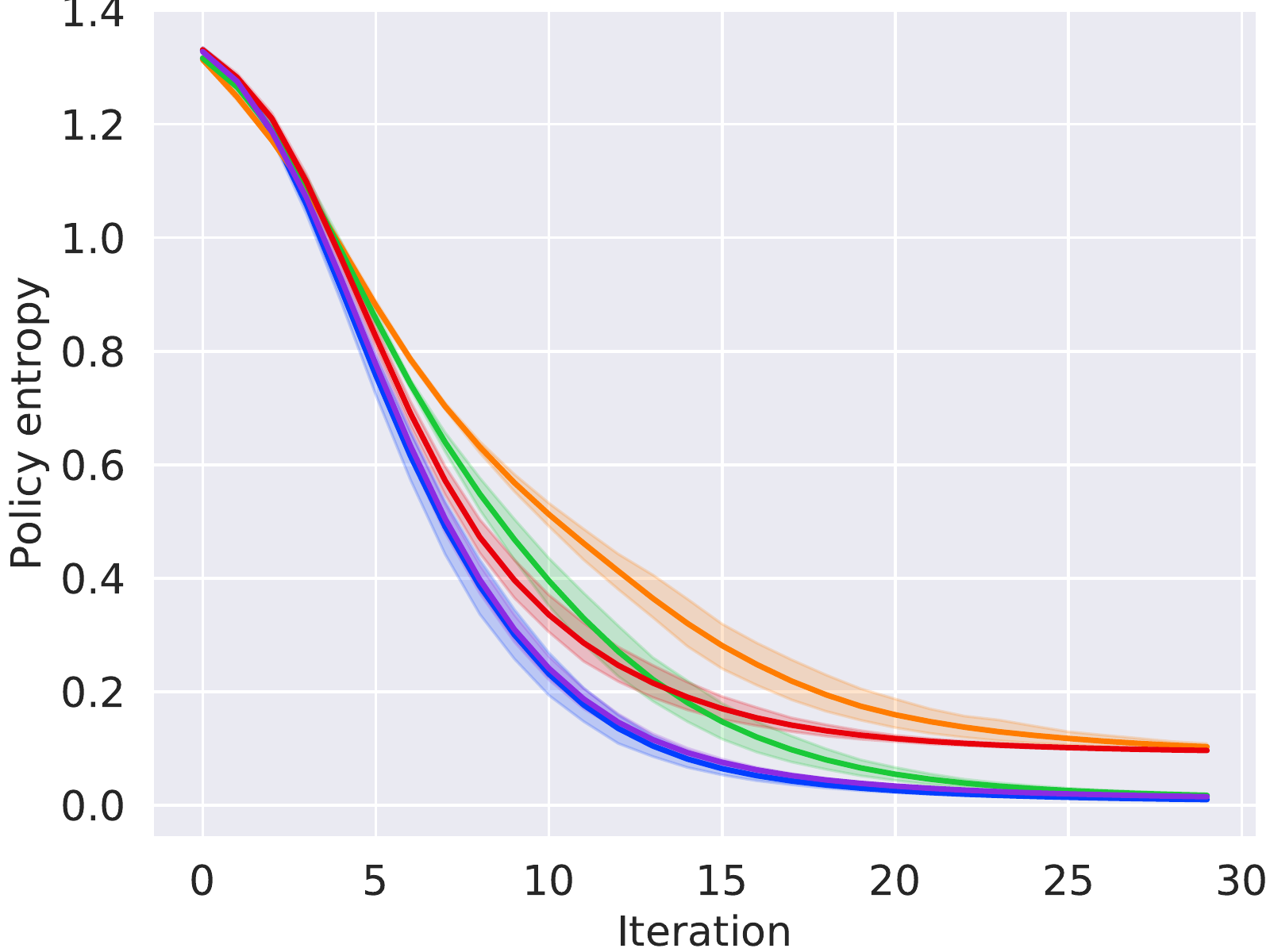}}
\caption{Policy entropy of all methods under different cases. The solid lines correspond to the mean and the shaded regions correspond to 95\% confidence interval over 5 runs.}
\label{fig.exp2.policy_entropy}
\end{figure*}

To summary, the choice of the state distribution in PG would cause the difference in the final performance. Large discrepancy between it and the interested state distribution would probably damage the performance. The stationary state distribution is usually a better choice than the initial state distribution because it reflects the states visited most frequently by the agent. The initial state distribution may omit important states or emphasis worthless ones. On the other hand, the approximate value function PG mainly influences the convergence speed if it can converge to the ground truth, otherwise it would also lead to worse performance. In practice, generally speaking, the true value function is inaccessible, therefore the more accurate the value estimates, the more perfect the policy gradient is, and the better the convergence speed and performance are. A suggestion from our experiment is that updating the value function more frequently that the policy. Besides, a larger learning rate is often preferred.

\section{Conclusion}\label{sec.conclusion}
In this paper, we group current RL algorithms by the direct and indirect taxonomy enlightened by the field of optimal control, where direct RL is defined as algorithms that solve the optimal policy by directly optimizing the expectation of accumulative future rewards using gradient descent methods while indirect RL is defined as algorithms that get optimal policy by indirectly solving the sufficient and necessary condition from Bellman's principle of optimality, i.e., the Bellman equation. We study policy gradient forms of direct and indirect RL, show that both of them can derive the actor-critic architecture and can be unified into a policy gradient with the approximate value function and the stationary state distribution, revealing the equivalence of direct and indirect RL. On that basis, we classify current mainstream RL algorithms by the direct and indirect taxonomy and compare it with other taxonomies including value-based and policy-based, model-based and model-free. Besides, we study the difference of the direct and indirect RL by simulations in a Gridworld task, suggesting experimentally the effect of different components in the PG, including the state distribution and the value function.

In the future, the findings of this work are promising to help to extend and reshape the current RL algorithms from its understanding of the behind learning mechanisms. For instance, we can design a brand new sufficient and necessary condition of the optimal policy in the indirect RL or establish a novel objective function in the direct RL to develop algorithms with better performance. In addition, as in the optimal control, the two families of algorithms can also be combined to eliminate their respective shortcomings.

\section*{acknowledgements}
We would like to acknowledge Mr. Zhengyu Liu for his valuable suggestions throughout this research. This work was supported by International Science \& Technology Cooperation Program of China under 2019YFE0100200, and Tsinghua University-Toyota Joint Research Center for AI Technology of Automated Vehicle.

\bibliography{main.bbl}

\end{document}